\newtheorem{obs}{Observation}
\newtheorem{theorem}{Theorem}
\newtheorem{rem}{Remark}
\newtheorem*{rem*}{Remark}
\newcommand{\defeq}{\vcentcolon=}
\crefname{equation}{Eq.}{Eqs.}
\crefname{figure}{Fig.}{Figs.}
\crefname{section}{Sec.}{Secs.}
\crefname{subsection}{Sec.}{Secs.}
\crefname{theorem}{Thm.}{Thms.}
\crefname{appendix}{Appx.}{Appx.}
\crefname{lemma}{Lemma}{Lemmas}
\crefname{algocf}{Alg.}{Algs.}
\Crefname{algocf}{Algorithm}{Algorithms}
\newif\ifcomments
\ifcomments\newcommand{\comments}[1]{#1}\else\newcommand{\comments}[1]{}\fi
\definecolor{clrgp}{rgb}{.9,0,.9}
\definecolor{red}{rgb}{.8,0,0}
\definecolor{blue}{rgb}{0,.8, 0}
\definecolor{gray}{rgb}{0.41, 0.41, 0.41}
\definecolor{forestgreen}{rgb}{0.13, 0.55, 0.13}
\definecolor{subtle}{RGB}{152,78,163}
\icmltitlerunning{Bias-Free Scalable Gaussian Processes via Randomized Truncations}
\DeclareMathOperator*{\argmin}{arg\,min}
\DeclareMathOperator*{\expectedvalue}{\mathbb{E}}
\newcommand{\bigo}[1]{\ensuremath{\mathchoice{\mathcal O \! \left( #1 \right)}}{\mathcal O(#1)}{\mathcal O(#1)}{\mathcal O(#1)}}
\newcommand{\reals}{\ensuremath{\mathbb{R}}}
\newcommand{\normaldist}[2]{\ensuremath{\mathchoice{\mathcal{N} \left( #1, #2 \right) }{ \mathcal{N}(#1,#2) }{ \mathcal{N}(#1,#2) }{}}}
\newcommand{\Evover}[2]{\ensuremath{\expectedvalue_{#1} \left[ #2 \right]}}
\newcommand{\intd}[1]{\,\mathrm{d}{#1}}
\newif\ifboldmatrix
\ifboldmatrix\newcommand{\boldmatrix}[1]{\mathbf{#1}}\else\newcommand{\boldmatrix}[1]{#1}\fi
\ifboldmatrix\newcommand{\boldgreekmatrix}[1]{\boldsymbol{#1}}\else\newcommand{\boldgreekmatrix}[1]{#1}\fi
\newcommand{\bb}{\ensuremath{\mathbf{b}}}
\newcommand{\bd}{\ensuremath{\mathbf{d}}}
\newcommand{\be}{\ensuremath{\mathbf{e}}}
\newcommand{\bomega}{\ensuremath{\boldsymbol{\omega}}}
\newcommand{\btheta}{\ensuremath{\boldsymbol{\theta}}}
\newcommand{\bw}{\ensuremath{\mathbf{w}}}
\newcommand{\bx}{\ensuremath{\mathbf{x}}}
\newcommand{\by}{\ensuremath{\mathbf{y}}}
\newcommand{\bz}{\ensuremath{\mathbf{z}}}
\newcommand{\bA}{\ensuremath{\boldmatrix{A}}}
\newcommand{\bD}{\ensuremath{\boldmatrix{D}}}
\newcommand{\bI}{\ensuremath{\boldmatrix{I}}}
\newcommand{\bK}{\ensuremath{\boldmatrix{K}}}
\newcommand{\bLambda}{\ensuremath{\boldgreekmatrix{\Lambda}}}
\newcommand{\bP}{\ensuremath{\boldmatrix{P}}}
\newcommand{\bphi}{\ensuremath{\boldsymbol{\phi}}}
\newcommand{\bpsi}{\ensuremath{\boldsymbol{\psi}}}
\newcommand{\bQ}{\ensuremath{\boldmatrix{Q}}}
\newcommand{\bT}{\ensuremath{\boldmatrix{T}}}
\newcommand{\bV}{\ensuremath{\boldmatrix{V}}}
\newcommand{\bW}{\ensuremath{\boldmatrix{W}}}
\newcommand{\bX}{\ensuremath{\boldmatrix{X}}}
\newcommand{\trainK}{\ensuremath{\widehat{\bK}_{\bX\bX}}}
\newcommand{\invquad}{\ensuremath{\by^{\top} \trainK^{-1} \by}}
\newcommand{\logdet}{\ensuremath{\log \vert \trainK \vert}}
\begin{document}

\twocolumn[
\icmltitle{Bias-Free Scalable Gaussian Processes via Randomized Truncations}



\icmlsetsymbol{equal}{*}

\begin{icmlauthorlist}
\icmlauthor{Andres Potapczynski}{equal,zucker}
\icmlauthor{Luhuan Wu}{equal,columbia}
\icmlauthor{Dan Biderman}{equal,zucker}
\icmlauthor{Geoff Pleiss}{zucker}
\icmlauthor{John P. Cunningham}{zucker,columbia}

\end{icmlauthorlist}

\icmlaffiliation{columbia}{Statistics Department, Columbia University}
\icmlaffiliation{zucker}{Zuckerman Institute, Columbia University}

\icmlcorrespondingauthor{Andres Potapczynski}{ap3635@columbia.edu}
\icmlcorrespondingauthor{Luhuan Wu}{lw2827@columbia.edu}
\icmlcorrespondingauthor{Dan Biderman}{db3236@columbia.edu}

\icmlkeywords{Machine Learning, ICML}

\vskip 0.3in
]


\printAffiliationsAndNotice{\icmlEqualContribution} 

\begin{abstract}
Scalable Gaussian Process methods are computationally attractive, yet introduce modeling biases that require rigorous study.
This paper analyzes two common techniques: early truncated conjugate gradients (CG) and random Fourier features (RFF).
We find that both methods introduce a systematic bias on the learned hyperparameters: CG tends to underfit while RFF tends to overfit.
We address these issues using randomized truncation estimators that eliminate bias in exchange for increased variance.
In the case of RFF, we show that the bias-to-variance conversion is indeed a
trade-off: the additional variance proves detrimental
to optimization. However, in the case of CG, our unbiased learning procedure
meaningfully outperforms its biased counterpart with minimal additional computation.
Our code is available at \url{https://github.com/cunningham-lab/RTGPS}.
\end{abstract}

\section{Introduction}
\label{intro}


Gaussian Processes (GP) are popular and expressive nonparametric models, and considerable effort has gone into alleviating their cubic runtime complexity.
Notable successes include
inducing point methods \citep[e.g.][]{snelson2006sparse,titsias2009variational,hensman2013gaussian},
finite-basis expansions \citep[e.g.][]{rahimi2008random,mutny2019efficient,wilson2020efficiently, loper2020general},
nearest neighbor truncations \citep[e.g.][]{datta2016hierarchical,katzfuss2021general},
and iterative numerical methods \citep[e.g.][]{cunningham2008fast,cutajar2016preconditioning,gardner2018gpytorch}.
Common to these techniques is the classic \emph{speed-bias tradeoff}: coarser GP approximations afford faster but more biased solutions that in turn affect both the model's predictions and learned hyperparameters.
While a few papers analyze the bias of inducing point methods \citep{bauer2017understanding,burt2019rates}, the biases of other approximation techniques, and their subsequent impact on learned GP models, have not been rigorously studied.

Here we scrutinize the biases of two popular techniques -- random Fourier features (RFF) \citep{rahimi2008random} and conjugate gradients (CG) \citep[e.g.][]{cunningham2008fast,cutajar2016preconditioning,gardner2018gpytorch}. These methods are notable due to their popularity and because they allow dynamic control of the speed-bias tradeoff: at any model evaluation, the user can adjust the number of CG iterations or RFF features to a desired level of approximation accuracy. In practice, it is common to truncate these methods to a fixed number of iterations/features that is deemed adequate. However, such truncation will stop short of an exact (machine precision) solution and potentially lead to biased optimization outcomes.  

We provide a novel theoretical analysis of the biases resulting from RFF and CG on the GP log marginal likelihood objective.
Specifically, we prove that CG is biased towards hyperparameters that underfit the data, while RFF is biased towards overfitting.  In addition to yielding suboptimal hyperparameters, these biases hurt posterior predictions, regardless of the inference method used at test-time.  Perhaps surprisingly, this effect is not subtle, as we will demonstrate.
Our analysis suggests there is value in debiasing GP learning with CG and RFF.  To do so, we turn to recent work that shows the merits of exchanging the speed-bias tradeoff for a \emph{speed-variance tradeoff} \citep{beatson2019efficient,chen2020residual,luo2020sumo, oktay2020randomized}. 
These works all introduce a randomization procedure that reweights elements of a fast truncated estimator, eliminating its bias at the cost of increasing its variance.

We thus develop bias-free versions of GP learning with CG and RFF using randomized truncation estimators.  In short, we randomly truncate the number of CG iterations and RFF features, while reweighting intermediate solutions to maintain unbiasedness.  Our variant of CG uses the {\bf R}ussian {\bf R}oulette estimator \cite{kahn1955use}, while our variant of RFF uses the {\bf S}ingle {\bf S}ample estimator of \citet{lyne2015russian}.
We believe our {\bf RR-CG} and {\bf SS-RFF} methods to be the ﬁrst to produce unbiased estimators of the GP log marginal likelihood with  $< \bigo{N^3}$ computation.



Finally, through extensive empirical evaluation, we find our methods and their biased counterparts indeed constitute a bias-variance tradeoff. Both RR-CG and SS-RFF are unbiased, recovering nearly the same optimum as the exact GP method, while GP trained with CG and RFF often converge to solutions with worse likelihood. We note that bias elimination is not always practical. For SS-RFF, the optimization is slow, due to the large auxiliary variance needed to counteract the slowly decaying bias of RFF. On the other hand, RR-CG incurs a minimal variance penalty, likely due to the favorable convergence properties of CG. In a wide range of benchmark datasets, RR-CG demonstrates similar or better predictive performance compared to CG using the same expected computational time. 

To summarize, this work offers three main contributions: 
\begin{itemize}[noitemsep,topsep=0pt,parsep=0pt,partopsep=0pt]
    \item theoretical analysis of the bias of CG- and RFF-based GP approximation methods (\S 3)
    \item RR-CG and SS-RFF: bias-free versions of these popular GP approximation methods (\S 4)
    \item results demonstrating the value of our RR-CG and SS-RFF methods (\S 3 and 5).
\end{itemize}


\section{Background}
\label{background}

We consider observed data $\mathcal{D} = \{(\bx_i, y_i)\}_{i=1}^N$ for $\bx_i \in \mathbb{R}^d$ and $y_i \in \mathbb{R}$, and the standard GP model:
\begin{equation*}
\begin{gathered}
    f(\cdot) \sim \mathcal{GP} ( \mu(\cdot), k(\cdot, \cdot)), \\
\by_i = f(\bx_i) + \epsilon_i, \qquad \epsilon_i \sim \normaldist{0}{\sigma^2}
\end{gathered}
\end{equation*}

where $k(\cdot, \cdot)$ is the covariance kernel, $\mu(\cdot)$ is set to zero without loss of generality, and hyperparameters are collected into the vector $\btheta$, which is optimized as:
  \begin{equation}
  \begin{aligned}
   \btheta^* &= {\textstyle \argmin_{\btheta} } \: \mathcal{L}(\btheta) 
   \\
    \mathcal{L}(\btheta)  &=  - \log p( \by \! \mid \! \bX; \btheta) \\
    & = \frac{1}{2} \big(   \underbrace{\log \vert
    \trainK \vert}_{\textrm{model complexity}} + \underbrace{\by^{\! \top}
    \trainK^{-1} \by}_{\textrm{data fit}}  +  N \log 2 \pi \big) 
  \end{aligned}
  \label{eqn:log_lik}
  \end{equation}

where $\trainK \in \reals^{N \times N}$ is the Gram matrix of all data points with diagonal observational noise:
$$
\trainK[i, j] = k(\bx_i, \bx_j) + \sigma^2 \mathbb{I}_{i = j}.
$$
Following standard practice, we optimize $\btheta$ with gradients: 
  %
  \begin{align}
    {\textstyle \frac{\partial \mathcal{L}}{\partial \btheta} } &= \frac{1}{2} \left(
     \tr{\trainK^{-1} {\textstyle \frac{\partial \trainK}{\partial \btheta}} } -
     \by^{\top} \trainK^{-1} {\textstyle \frac{\partial \trainK}{\partial \btheta}} \trainK^{-1} \by \right).
    \label{eqn:log_lik_deriv}
  \end{align}
Three terms thus dominate the computational complexity: 
$\by^\top \trainK^{-1} \by$, $\log \vert \trainK
\vert$, and $\text{tr} \{ \trainK^{-1}\frac{\partial \trainK}{\partial \btheta} \}$. 
The common approach to computing this triad is the Cholesky factorization, requiring
$\bigo{N^3}$ time and $\bigo{N^2}$ space.

Extensive literature has accelerated the inference and hyperparameter learning of GP. Two very popular strategies are using \emph{conjugate gradients}  \citep{cunningham2008fast, cutajar2016preconditioning,
gardner2018gpytorch, wang2019exact} to approximate the linear solves in \cref{eqn:log_lik_deriv}, and  
\emph{random Fourier features} \citep{rahimi2008random}, which constructs a randomized finite-basis approximation of the kernel.

\subsection{Conjugate Gradients}
\label{sec:CG_background}
To apply conjugate gradients to GP learning, we begin by replacing the gradient in \cref{eqn:log_lik_deriv} with a stochastic estimate 
\citep{cutajar2016preconditioning,gardner2018gpytorch}:
%
\begin{align}
  {\textstyle \frac{\partial \mathcal{L}}{\partial \btheta} }
  &\approx
   \frac{1}{2} \left( \bz^\top \trainK^{-1} {\textstyle \frac{\partial \trainK}{\partial \btheta}} \bz -
   \by^{\top} \trainK^{-1} {\textstyle \frac{\partial \trainK}{\partial \btheta}} \trainK^{-1} \by \right),
  \label{eqn:log_lik_deriv_stochastic}
\end{align}
where $\bz$ is a random variable such that $\mathbb{E}[\bz] = 0$ and $\mathbb{E}[\bz\bz^T] = \bI$.
Note that the first term constitutes a stochastic estimate of the trace term \cite{hutchinson1989stochastic}.
Thus, stochastic optimization of \cref{eqn:log_lik} can be reduced to computing the linear solves $\trainK^{-1} \by$ and $\trainK^{-1} \bz$.

Conjugate gradients (CG) \cite{hestenes1952methods} is an iterative algorithm for solving positive definite linear systems $\bA^{-1} \bb$.
It consists of a three-term recurrence, where each new term requires only a matrix-vector multiplication with $\bA$.
More formally, each CG iteration computes a new term of the following summation:
\begin{align}
  \textstyle
  \bA^{-1} \bb = \sum_{i=1}^N \gamma_i \bd_i, \qquad
  \label{eqn:cg_series}
\end{align}
where the $\gamma_i$ are coefficients and the $\bd_i$ are conjugate search directions \cite{golub2012matrix}.
$N$ iterations of CG produce all $N$ summation terms and recover the exact solution. In practice, exact convergence may require more than $N$ iterations due to inaccuracies of floating point arithmetic.
However, the summation converges exponentially, and so $J \ll N$ iterations may suffice to achieve high accuracy.

CG is an appealing method for computing $\trainK^{-1} \by$ and $\trainK^{-1} \bz$ due to its computational complexity and its potential for GPU-accelerated matrix products.
$J$ iterations takes at most $\bigo{JN^2}$ time 
and $\bigo{N}$ space if the matrix-vector products are performed in a map-reduce fashion \cite{wang2019exact}.
However, ill-conditioned kernel matrices hinder the convergence rate \cite{cutajar2016preconditioning}, and so the $J^\text{th}$ CG iteration may yield an inaccurate approximation of \cref{eqn:log_lik_deriv_stochastic}.


\subsection{Random Fourier Features}
\label{RFF_background}
\citet{rahimi2008random} introduce a randomized finite-basis approximation to
stationary kernels:
  \begin{equation}
      k(\bx, \bx') = k(\bx - \bx') \approx \bphi(\bx)^{\top}\bphi(\bx')
  \end{equation}
where $\bphi(\bx) \in \reals^J$ and $J \ll N$.
The RFF approximation relies on Bochner's theorem \citep{bochner1959lectures}: letting
$\tau=\bx-\bx'$, all stationary kernels $k(\tau)$ on $\mathbb{R}^d$ can be exactly expressed as the Fourier dual of a nonnegative measure $\mathbb{P}(\bomega)$:
$
      k(\tau) = \textstyle \int \mathbb{P}(\bomega) \exp(i\bomega\tau) \, d\bomega.
      \label{eqn:bochner}
$
A Monte Carlo approximation of this Fourier transform yields:
\begin{align*}
  k(\tau) \approx \frac{2}{J} \sum_{j=1}^{J/2} \exp(i\bomega_j\tau), \quad
  \bomega_j \sim \mathbb{P}(\bomega),
\end{align*}
which simplifies to a finite-basis approximation:
  %
%
\begin{gather*}
  \bK_{\bX\bX} \approx [\bphi(\bx_1) \ldots \bphi(\bx_n)] [\bphi(\bx_1) \ldots \bphi(\bx_n)]^\top,
  \\
  \bphi(\bx) = [ \cos(\bomega_i^\top \bx), \: \sin(\bomega_i^\top \bx)]_{i=1}^{J/2},
  \quad
  \bomega_i \sim \mathbb{P}(\bomega).
\end{gather*}
For many common kernels, $\mathbb{P}(\bomega)$ can be computed
in closed-form (e.g. RBF kernels have zero-mean Gaussian spectral densities).
The approximated log likelihood can be computed in
$\bigo{J^3 + N}$ time and $\bigo{JN}$ space using the Woodbury inversion lemma and the matrix determinant
lemma, respectively.
The number of random features $J/2$ is a user choice, with typical values between
100-1000.
More features lead to more accurate kernel approximations.

\begin{figure*}[ht]
\vskip 0.1in
\begin{center}
\includegraphics[scale=0.4]{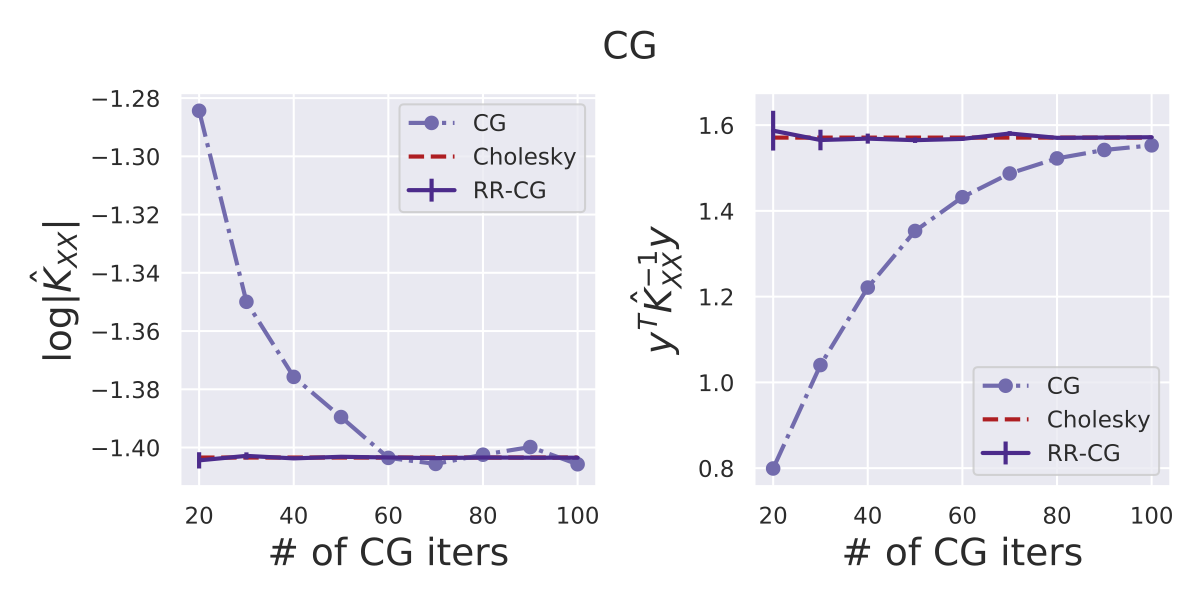}
\includegraphics[scale=0.4]{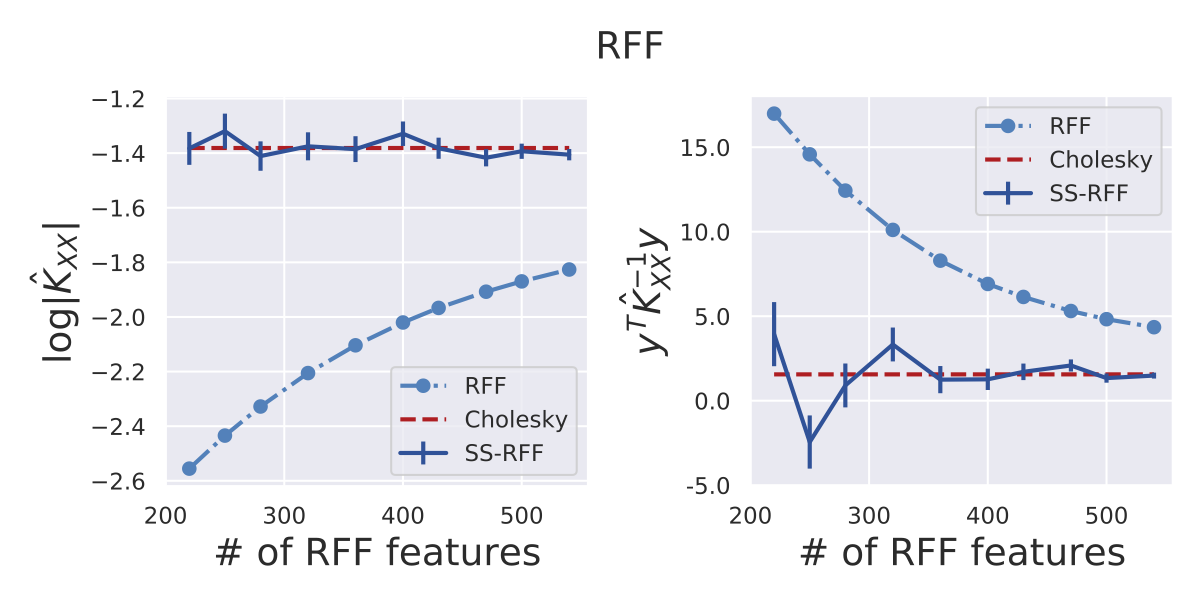}
 \caption{
  CG (left) systematically overestimates $\logdet$ and underestimates $\invquad$ whereas RFF (right)
  does the opposite. The dashed orange line shows the exact values computed by Cholesky. Our unbiased methods, RR-CG (left) and SS-RFF (right), recover the true $\logdet$ and $\invquad$ values.
  For these two methods, the $x$-axis indicates the expected number of iterations/features.}
\label{fig:debias_cg_rff}
\end{center}
\vskip -0.2in
\end{figure*}

\subsection{Unbiased Randomized Truncation}\label{subsec:randomized_truncation_intro}
    We will now briefly introduce Randomized Truncation Estimators, which are the primary tool we use to unbias the CG and RFF log marginal likelihood estimates.
    At a high level, assume that we wish to estimate some quantity $\psi$ that can be expressed as a (potentially-infinite) series: $$
      \textstyle \psi = \sum_{j=1}^H \Delta_j, \quad H \in \mathbb{N} \cup \{\infty\}.
    $$
    Here and in the following sections, $\Delta_j$ can either be random or deterministic.
    To avoid the expensive evaluation of the full summation, a randomized truncation estimator chooses a random term $J \in \{1, \ldots, H\}$ with probability mass function $\mathbb{P}(J) = \mathbb{P}(\mathcal{J}=J)$ after which to truncate computation.
    In the following, we introduce two means of deriving unbiased estimators by upweighting the summation terms. 

    \textbf{The Russian Roulette estimator} \citep{kahn1955use} 
    obtains an unbiased estimator $\bar{\psi}_J$ by truncating the sum after $J \sim \mathbb{P}(J)$ terms and dividing the surviving terms by their survival probabilities:
    \begin{equation} \label{eqn:rr}
        \bar {\psi}_J
        = \sum_{j=1}^{J} \frac{\Delta_j}{\mathbb{P}(\mathcal{J} \geq j)}
        = \sum_{j=1}^{H} \left( \frac{\mathbb{I}_{J \geq j}}{\mathbb{P}(\mathcal{J} \geq j)} \right) \Delta_j, 
    \end{equation}
    and, $
        \mathbb{E}[ \bar{\bpsi}_J ] = \sum_{i=1}^{H} \Delta_j =  \psi.
    $
    (See appendix for further derivation.)
    The choice of $\mathbb P (J)$ determines both the computational efficiency and the variance of $\bar {\psi}_J$.
    A thin-tailed $\mathbb P(J)$ will often truncate sums after only a few terms ($J \ll H$).
    However, tail events $(J \approx H)$ are upweighted inversely to their low survival probability, and so thin-tailed truncation distributions may lead to high variance.
    
    \textbf{The Single Sample estimator} \citep{lyne2015russian} implements an alternative reweighting scheme. 
    After drawing $J \sim \mathbb{P}(J)$, it computes a single summation term $\Delta_J$, which it upweights by $1/\mathbb{P}(J)$:
    \begin{equation}
        \bar{\psi}_J = \frac{\Delta_J}{\mathbb{P}(J)}
                = \sum_{j=1}^{H} \left( \frac{\mathbb{I}_{J = j}}{\mathbb{P}(\mathcal{J} = j)} \right) \Delta_j .
        \label{eqn:ss_estimator}
    \end{equation}
    This procedure is unbiased, and it amounts to estimating $\psi$ using a single importance-weighted sample from $\mathbb{P}(J)$ (see appendix).
    Again, $\mathbb{P}(J)$ controls the speed/variance trade-off.
    We refer the reader to \cite{beatson2019efficient} for a detailed comparison of these two estimators. We emphasize that both estimators remain unbiased even if $\Delta_j$ is a random variable, as long as it is independent from the random truncation integer $J$.

\begin{figure}[ht]
\vskip 0.1in
\begin{center}
\centerline{\includegraphics[scale=0.4]{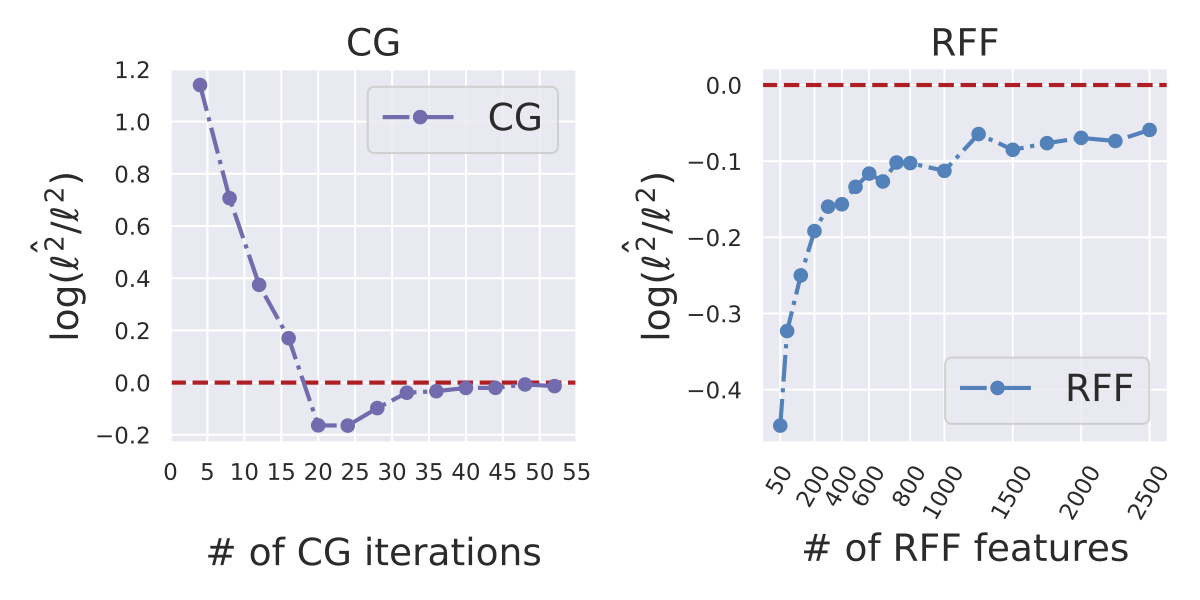}}
\caption{
  Kernel lengthscale values learned by optimizing (biased) CG and RFF log marginal 
  likelihood approximations.
  CG overestimates the optimal kernel lengthscale whereas RFF
  underestimates it.  We plot the divergence (in log-ratio scale) between the learned
  and true lengthscales as a function of the number of CG iterations (left) and of
  the number of RFF samples (right).}
\label{lengthscale-bias}
\end{center}
\vskip -0.2in
\end{figure}

 \section{GP Learning with CG and RFF is Biased}\label{sec:bias}
  Here we prove that early truncated CG and RFF provide biased approximations to the terms comprising the GP log marginal  likelihood (Eq.~\ref{eqn:log_lik}). We also derive the bias decay rates for each method. 
  We then empirically demonstrate these biases and show they affect the hyperparameters learned through optimization.
  Remarkably, we find that the above biases are \emph{highly systematic}: CG-based GP learning favors underfitting hyperparameters while RFF-based learning favors overfitting hyperparameters. 
 
  \subsection{CG Biases GP Towards Underfitting}
  \label{subsec:CG_bias}
    In the GP literature, CG has often been considered an ``exact'' method for computing the log marginal likelihood \cite{cunningham2008fast,cutajar2016preconditioning}, as the iterations are only truncated after reaching a pre-specified residual error threshold (e.g. $10^{-10}$).
    However, as CG is applied to ever-larger kernel matrices it is common to truncate the CG iterations before reaching this convergence \cite{wang2019exact}.
    While this accelerates the hyperparameter learning process, the resulting solves and gradients can no longer be considered ``exact.''
    In what follows, we show that the early-truncated CG optimization objective is not only approximate but also systematically biased towards underfitting.
    
    To analyze the early-truncation bias, we adopt the analysis of \citet{gardner2018gpytorch} that recovers the GP log marginal likelihood (Eq.~\ref{eqn:log_lik}) from the stochastic gradient's CG estimates of $\trainK^{-1} \by$ and $\trainK^{-1} \bz$ (Eq.~\ref{eqn:log_lik_deriv_stochastic}).
    Recall the two terms in the log marginal likelihood are the ``data fit'' term $\by^\top \trainK^{-1} \by$ and the ``model complexity'' term $\logdet$.
    The first term falls directly out of the CG estimate of $\trainK^{-1} \by$, while a stochastic estimate of $ \logdet $ can be obtained through the byproducts of CG's computation for  $\trainK^{-1} \bz$.
    \citet{gardner2018gpytorch} show that the CG coefficients in \cref{eqn:cg_series}
    can be manipulated to produce a partial tridiagonalization: $\bT^{(J)}_{\bz} = \bQ^{(J)\top}_\bz \trainK \bQ^{(J)}_\bz,$ where $\bT^{(J)}_\bz \in \reals^{J \times J}$ is tridiagonal.
    $\bT^{(J)}_\bz$ can compute the Stochastic Lanczos Quadrature estimate of $\trainK$ \citep{ubaru2017fast, dong2017scalable}:
    \begin{align}\label{eqn:logdet_estimate}
      \log |\trainK | &= \mathbb{E} \left[ \bz^T (\log \trainK) \bz \right]
      \nonumber
      \\
      &\approx
      \Vert \bz \Vert^2 \be_1^\top  \left( \log \bT_\bz^{(J)} \right) \be_1,
    \end{align}
    where $\log (\cdot )$ is the matrix logarithm and $\be_1$ is the first row of the identity matrix.
    The following theorem analyzes the bias of these $\by^\top \trainK^{-1} \by$ and $\logdet$ estimates:
 \begin{theorem}\label{CG_bias_theorem}
  Let $u_J$ and $v_J$ be the estimates of $\by^\top \trainK^{-1} \by$ and $\logdet$ respectively after $J $ iterations of CG; i.e.:
  \begin{align*}
     u_J = \by^\top \left({\textstyle \sum_{i=1}^J } \gamma_i \bd_i \right), \quad
     v_J = \Vert \bz \Vert^2 \be_1^\top \left(\log \bT_\bz^{(J)} \right) \be_1.
  \end{align*}
  If $J<N$, CG underestimates the inverse quadratic term and overestimates the log determinant in expectation:
  \begin{equation}
        u_J
        \leq
        \invquad,
        \quad
        \mathbb{E}_{\bz} [ v_J ]  \geq \logdet.
  \end{equation}
  The biases of both terms decay at a rate of $\bigo{C^{-2J}}$, where $C$ is a constant that depends on the conditioning of $\trainK$.
\end{theorem}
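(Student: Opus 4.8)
The plan is to prove the three claims separately: the data-fit inequality $u_J \leq \invquad$ from the $\trainK$-optimality of CG; the log-determinant inequality $\mathbb{E}_\bz[v_J] \geq \logdet$ from the Gauss-quadrature interpretation of stochastic Lanczos quadrature; and the $\bigo{C^{-2J}}$ rates from classical convergence estimates for each. For the first term, I would use that CG with zero initial guess produces the iterate $\bx_J = \sum_{i=1}^J \gamma_i \bd_i$ inside the Krylov subspace $\mathcal{K}_J = \mathrm{span}\{\by, \trainK\by, \ldots, \trainK^{J-1}\by\}$, with the Galerkin property that the residual $\by - \trainK\bx_J$ is orthogonal to $\mathcal{K}_J$. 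Since $\bx_J \in \mathcal{K}_J$, this gives $\bx_J^\top(\by - \trainK\bx_J) = 0$, hence $u_J = \by^\top\bx_J = \bx_J^\top\trainK\bx_J = \bx_J^\top\trainK\bx_\star$ with $\bx_\star \defeq \trainK^{-1}\by$, and a one-line expansion then yields
\begin{equation*}
  \invquad - u_J = \bx_\star^\top\trainK\bx_\star - 2\,\bx_J^\top\trainK\bx_\star + \bx_J^\top\trainK\bx_J = \big\Vert \bx_\star - \bx_J \big\Vert_{\trainK}^2 \geq 0 ,
\end{equation*}
which is the desired inequality and, as a bonus, identifies the bias with the squared $\trainK$-norm of the CG error.

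For the log-determinant term, I would invoke the classical equivalence between the Lanczos recursion and Gauss quadrature: $\be_1^\top(\log\bT_\bz^{(J)})\be_1$ is exactly the $J$-node Gauss rule applied to $\int \log\lambda \, d\mu_\bz(\lambda)$, where $\mu_\bz$ is the spectral measure of $\trainK$ relative to the unit vector $\bz/\Vert\bz\Vert$, so that $\int \log\lambda \, d\mu_\bz(\lambda) = \Vert\bz\Vert^{-2}\bz^\top(\log\trainK)\bz$. The Gauss remainder has the form $\tfrac{f^{(2J)}(\eta)}{(2J)!}\int \prod_{k=1}^J(\lambda - \tau_k)^2 \, d\mu_\bz(\lambda)$ for some $\eta$ in the spectral interval, and for $f = \log$ we have $f^{(2J)}(\lambda) = -(2J-1)!\,\lambda^{-2J} < 0$ on $(0,\infty)$, while the integral of a square against a nonnegative measure is nonnegative; hence the remainder is $\leq 0$ and the quadrature overestimates: $\be_1^\top(\log\bT_\bz^{(J)})\be_1 \geq \Vert\bz\Vert^{-2}\bz^\top(\log\trainK)\bz$, i.e. $v_J \geq \bz^\top(\log\trainK)\bz$ pointwise in $\bz$. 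Taking $\mathbb{E}_\bz$ with $\mathbb{E}[\bz\bz^\top] = \bI$ and Hutchinson's identity gives $\mathbb{E}_\bz[v_J] \geq \tr{\log\trainK} = \logdet$. The hypothesis $J < N$ is exactly what prevents the Lanczos process from terminating and the remainder from collapsing to zero.

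For the rates, I would combine the identity $\invquad - u_J = \Vert\bx_\star - \bx_J\Vert_{\trainK}^2$ with the standard CG bound $\Vert\bx_\star - \bx_J\Vert_{\trainK} \leq 2\big(\tfrac{\sqrt\kappa-1}{\sqrt\kappa+1}\big)^J\Vert\bx_\star\Vert_{\trainK}$, $\kappa = \kappa(\trainK)$, to get a bias $\bigo{C^{-2J}}$ with $C = \tfrac{\sqrt\kappa+1}{\sqrt\kappa-1} > 1$; and for the log-determinant bias I would use the classical geometric-convergence estimate for Gauss quadrature of analytic functions — $\log$ is analytic on a complex neighborhood of $[\lambda_{\min}, \lambda_{\max}] \subset (0,\infty)$, since $\lambda_{\min} \geq \sigma^2$ — which gives error $\bigo{\rho^{-2J}}$, where $\rho$ is the parameter of the largest Bernstein ellipse (under the affine map $[\lambda_{\min},\lambda_{\max}] \to [-1,1]$) that excludes the singularity at $0$. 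A short computation gives $\rho = \tfrac{\sqrt{\lambda_{\max}} + \sqrt{\lambda_{\min}}}{\sqrt{\lambda_{\max}} - \sqrt{\lambda_{\min}}} = \tfrac{\sqrt\kappa+1}{\sqrt\kappa-1} = C$, the same constant, so both biases decay as $\bigo{C^{-2J}}$.

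I expect the data-fit half to be essentially routine (it is the $\trainK$-optimality of CG). The main obstacle is the log-determinant half: correctly invoking the Lanczos--Gauss-quadrature correspondence and the sign-definite remainder, and then passing the pointwise inequality through the expectation over $\bz$ (which is clean, via Hutchinson). The most delicate piece overall is the geometric rate for the log-determinant bias in the last step, which needs a quantitative Gauss-quadrature error bound for $\log$ and a careful placement of the Bernstein ellipse relative to the origin; I would lean on the standard convergence analysis of stochastic Lanczos quadrature here.
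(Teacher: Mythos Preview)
Your proposal is correct. For $v_J$ you follow essentially the paper's argument: the Lanczos--Gauss-quadrature correspondence together with the sign of $f^{(2J)}$ for $f=\log$ forces the quadrature to overshoot, and the expectation over $\bz$ via Hutchinson's identity gives $\mathbb{E}_\bz[v_J] \geq \logdet$. For the rates, the paper simply cites the stochastic Lanczos quadrature analysis of Ubaru et al.\ (2017), which is exactly the Bernstein-ellipse estimate you sketch; your computation of $\rho = (\sqrt\kappa+1)/(\sqrt\kappa-1)$ matches their constant.

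The genuine difference is your treatment of $u_J$. The paper handles both terms uniformly through quadrature: it first invokes the CG--Lanczos equivalence $u_J = \Vert \by \Vert^2\, \be_1^\top (\bT_\by^{(J)})^{-1} \be_1$, identifies this as the $J$-node Gauss rule for $f(t) = t^{-1}$, and reads off the sign of the remainder from $f^{(2J)}(t) = (2J)!\, t^{-(2J+1)} > 0$. Your route through the Galerkin orthogonality of the CG residual is more elementary --- it needs neither the Lanczos connection nor any quadrature machinery --- and it buys you the exact identity $\invquad - u_J = \Vert \bx_\star - \bx_J \Vert_{\trainK}^2$, which then plugs directly into the classical CG error bound to give the $C^{-2J}$ rate without further work. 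The paper's unified quadrature treatment is tidier in that it handles both terms with one tool, but your argument makes the structure of the $u_J$ bias (and its connection to the standard CG convergence theory) more transparent.
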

\textit{Proof sketch.}
The direction of the biases can be proved using a connection between CG and numeric quadrature.
$u_J$ and $v_J$ are exactly equal to the $J$-point Gauss quadrature approximation of $\by^\top \trainK^{-1} \by$ and $\Vert \bz \Vert^2 \be_1^\top ( \log \bT^{(N)}_\bz ) \be_1$ represented as Riemann-Stieltjes integrals.
The sign of the CG approximation bias follows from the standard Gauss quadrature error bound, which is negative for $\by^\top \trainK^{-1} \by$ and positive for $\logdet$.
The convergence rates are from standard bounds on CG \cite{golub2012matrix} and the analysis of \citet{ubaru2017fast}.
See appendix for a full proof.

\cref{fig:debias_cg_rff} confirms our theoretical analysis and demonstrates the systematic
biases of CG. We plot the log marginal likelihood terms for a subset of the PoleTele UCI dataset, varying the number of CG iterations ($J$) used to produce the estimates.
Compared against the exact terms (computed with Cholesky), we see an overestimation of $\logdet$ and an
underestimation of $\invquad$.
These biases are most prominent when using few CG iterations.

We turn to study the effect of CG learning on hyperparameters.
Since the log marginal likelihood is a nonconvex function of $\btheta$, it is not possible to directly prove how the bias affects $\btheta$.
Nevertheless, we know intuitively that underestimating $\invquad$ de-prioritizes model fit while overestimating $\logdet$ over-penalizes model complexity.
Thus, the learned hyperparameters will likely underfit the data.
Underfitting may manifest in an overestimation of the learned lengthscale $\ell$, as low values of $\ell$ increase the flexibility and the complexity of the model.
This hypothesis is empirically confirmed in \cref{lengthscale-bias} (left panel).
We train a GP regression
model on a toy dataset: $y = x \sin(5 \pi x) + \varepsilon$ and $\varepsilon
\sim \mathcal{N}(0, 0.01)$.
We fix all hyperparameters other than the lengthscale, which is learned using both CG-based optimization and (exact) Cholesky-based optimization. The overestimation of $\ell$ decays with the number of CG iterations.

%
\subsection{RFF Biases GP Towards Overfitting}
\label{subsec:RFF_bias}
Previous work has studied the accuracy of RFF's approximation to the entries of the Gram matrix $k(\bx, \bx') \approx \bphi(\bx)^\top \bphi(\bx')$ \citep{rahimi2008random,sutherland2015error}.
However, to the best of our knowledge there has been little analysis of nonlinear functions of this approximate Gram matrix, such as $\invquad$ and $\logdet$ appearing in the GP objective.
Interestingly, we find that RFF systematically biases these terms: 
\begin{theorem}\label{RFF_bias_theorem}
  Let $\widetilde{\bK}_{J}$ be the RFF approximation with $J/2$ random features. In expectation, $\widetilde{\bK}_{J}$ overestimates the inverse quadratic and underestimates the log determinant:
  \begin{equation}
        {\textstyle \Evover{\mathbb{P}(\bomega)}{\by^{\top} \widetilde{\bK}_{J}^{-1} \by} }
        \geq
        \invquad
  \end{equation}
  \begin{equation}
        {\textstyle \Evover{\mathbb{P}(\bomega)}{\log |\widetilde{\bK}_{J}|} }
        \leq
        \logdet.
  \end{equation}
  The biases of both terms decay at a rate of $\bigo{1/J}$.
\end{theorem}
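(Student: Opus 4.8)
The plan is to exploit a structural advantage of RFF over CG: whereas early‑truncated CG distorts the kernel matrix in a one‑sided way, RFF is an \emph{unbiased} estimator of $\trainK$ itself, and the sign of the bias on $\invquad$ and $\logdet$ then drops out of Jensen's inequality. Write $\widetilde{\bK}_J=\boldsymbol{\Phi}\boldsymbol{\Phi}^\top+\sigma^2\bI$, where the rows of $\boldsymbol{\Phi}\in\reals^{N\times J}$ are the random feature maps $\bphi(\bx_m)$ and the noise term $\sigma^2\bI$ is added deterministically. Then $(\boldsymbol{\Phi}\boldsymbol{\Phi}^\top)_{mn}=\tfrac{2}{J}\sum_{i=1}^{J/2}\cos\!\big(\bomega_i^\top(\bx_m-\bx_n)\big)$ with the $\bomega_i\sim\mathbb{P}(\bomega)$ i.i.d., and by Bochner's theorem $\Evover{\mathbb{P}(\bomega)}{\cos(\bomega^\top\tau)}=k(\tau)$; hence $\Evover{\mathbb{P}(\bomega)}{\widetilde{\bK}_J}=\bK_{\bX\bX}+\sigma^2\bI=\trainK$. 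Since $\boldsymbol{\Phi}\boldsymbol{\Phi}^\top\succeq 0$ has entries bounded by $1$ in absolute value, there is a deterministic sandwich $\sigma^2\bI\preceq\widetilde{\bK}_J\preceq(\sigma^2+N)\bI$ almost surely, so $\widetilde{\bK}_J$ is a.s.\ invertible and every expectation below is finite.

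Given unbiasedness, I would obtain the two inequalities from operator convexity/concavity. The map $\bA\mapsto\bA^{-1}$ is operator convex on the positive‑definite cone, so $\bA\mapsto\by^\top\bA^{-1}\by$ is convex; Jensen's inequality gives $\Evover{\mathbb{P}(\bomega)}{\by^\top\widetilde{\bK}_J^{-1}\by}\ge\by^\top\big(\Evover{\mathbb{P}(\bomega)}{\widetilde{\bK}_J}\big)^{-1}\by=\invquad$. Dually, $\bA\mapsto\log|\bA|$ is concave on the positive‑definite cone, so $\Evover{\mathbb{P}(\bomega)}{\log|\widetilde{\bK}_J|}\le\log\big|\Evover{\mathbb{P}(\bomega)}{\widetilde{\bK}_J}\big|=\logdet$. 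These are exactly the stated relations.

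For the $\bigo{1/J}$ rate, set $\mathbf{E}=\widetilde{\bK}_J-\trainK$, so $\Evover{\mathbb{P}(\bomega)}{\mathbf{E}}=0$; since each $\mathbf{E}_{mn}$ is a centered average of $J/2$ i.i.d.\ bounded terms, $\Evover{\mathbb{P}(\bomega)}{\mathbf{E}_{mn}^2}=\bigo{1/J}$ and $\Evover{\mathbb{P}(\bomega)}{\Vert\mathbf{E}\Vert_2^2}\le\Evover{\mathbb{P}(\bomega)}{\Vert\mathbf{E}\Vert_F^2}=\bigo{N^2/J}$, while $\Vert\mathbf{E}\Vert_F\le N(1+k(0))$ almost surely. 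For the inverse quadratic I would use the exact resolvent identity
\[
\widetilde{\bK}_J^{-1}-\trainK^{-1}=-\,\trainK^{-1}\mathbf{E}\trainK^{-1}+\widetilde{\bK}_J^{-1}\mathbf{E}\trainK^{-1}\mathbf{E}\trainK^{-1},
\]
take expectations so the linear term drops, and bound the quadratic term in spectral norm by $\sigma^{-6}\Vert\mathbf{E}\Vert_2^2$ via $\Vert\trainK^{-1}\Vert_2,\Vert\widetilde{\bK}_J^{-1}\Vert_2\le\sigma^{-2}$; combined with the Jensen bound this yields $0\le\Evover{\mathbb{P}(\bomega)}{\by^\top\widetilde{\bK}_J^{-1}\by}-\invquad=\bigo{\sigma^{-6}N^2\Vert\by\Vert^2/J}$. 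For the log determinant, write
\[
\log|\widetilde{\bK}_J|-\logdet=\mathrm{tr}\log(\bI+\trainK^{-1}\mathbf{E})=\mathrm{tr}(\trainK^{-1}\mathbf{E})+R,
\]
with $R$ the matrix‑logarithm Taylor remainder; the linear term has exactly zero expectation, and $\Evover{\mathbb{P}(\bomega)}{|R|}=\bigo{1/J}$ by splitting on $\{\Vert\mathbf{E}\Vert_2\le\sigma^2/2\}$, where $|R|=\bigo{\sigma^{-4}\Vert\mathbf{E}\Vert_2^2}$ from the convergent series, and its complement, where $|R|$ is bounded by a deterministic constant depending on $N,\sigma^2,k(0)$ while $\mathbb{P}(\Vert\mathbf{E}\Vert_2>\sigma^2/2)\le 4\sigma^{-4}\Evover{\mathbb{P}(\bomega)}{\Vert\mathbf{E}\Vert_2^2}=\bigo{1/J}$ by Chebyshev. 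Hence the log‑determinant bias is also $\bigo{1/J}$.

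The sign statements are essentially immediate once the unbiasedness of $\widetilde{\bK}_J$ is recorded; I expect the main obstacle to be the $\bigo{1/J}$ rate for the log determinant, because $\trainK^{-1}\mathbf{E}$ need not have spectral radius below $1$ almost surely, so the matrix‑logarithm Taylor expansion does not converge pointwise — which is why the truncation/conditioning step above is needed, resting on the a.s.\ sandwich $\sigma^2\bI\preceq\widetilde{\bK}_J\preceq(\sigma^2+N)\bI$. The inverse‑quadratic rate avoids this through the exact algebraic identity. In all cases the constants hidden in $\bigo{1/J}$ depend on $N$, $\sigma^2$, $\Vert\by\Vert$, and $k(0)$, consistent with the theorem's asymptotic‑in‑$J$ claim.
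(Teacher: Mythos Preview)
Your proof is correct. The inequality part via Jensen and operator convexity/concavity is exactly what the paper does. Your rate analysis, however, takes a different route from the paper, and it is worth noting how.

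For the $\bigo{1/J}$ rate, the paper symmetrizes by writing $\widetilde{\bK}_J^{-1}=\trainK^{-1/2}\bigl(\bI-(\bI-\trainK^{-1/2}\widetilde{\bK}_J\trainK^{-1/2})\bigr)^{-1}\trainK^{-1/2}$, expands via a Neumann series (and a Taylor series for the logarithm), and then invokes the central-moments translation of \citet{nowozin2018debiasing} and \citet{angelova2012moments} to argue that the second central moment contributes $\mu_2/J$ while the first-order term vanishes. The paper explicitly \emph{assumes} the residual spectrum is below $1$ so that these series converge, without controlling the complementary event. Your argument avoids that assumption in two places: for the inverse quadratic you use the exact second-order resolvent identity, which is an algebraic equality requiring no series convergence and gives the $\sigma^{-6}\|\by\|^2\mathbb{E}\|\mathbf{E}\|_2^2$ bound directly; for the log determinant you split on $\{\|\mathbf{E}\|_2\le\sigma^2/2\}$ and control the bad event via Chebyshev against the deterministic sandwich $\sigma^2\bI\preceq\widetilde{\bK}_J\preceq(\sigma^2+N)\bI$. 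This makes your version more self-contained and technically more careful than the paper's, at the cost of slightly cruder constants (the paper's symmetrized form would in principle replace some factors of $\sigma^{-2}$ by $\|\trainK^{-1}\|_2$-type quantities, though the paper does not track constants either). Both approaches hinge on the same mechanism---the linear-in-$\mathbf{E}$ term has zero mean and the quadratic term is $\bigo{1/J}$---so the conceptual content is the same.
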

\textit{Proof sketch.}
The direction of the biases is a straightforward application of Jensen's inequality, noting that $\trainK^{-1}$ is a convex function and $\logdet$ is a concave function.
The magnitude of the bias is derived from a second-order Taylor expansion that closely resembles the analysis of \citet{nowozin2018debiasing}.
See appendix for full proof.

Again, \cref{fig:debias_cg_rff} confirms the
systematic biases of RFF, which decay at a rate proportional to the number of features, as predicted by \cref{RFF_bias_theorem}.
Hence, RFF should affect the learned hyperparameters in a manner opposite to CG. Overestimating $\invquad$ emphasizes data fitting while underestimating $\logdet$ reduces the model complexity penalty, overall resulting in overfitting behavior. Following the intuition presented in \cref{subsec:CG_bias}, we expect the lengthscale to be underestimated, as empirically confirmed by \cref{lengthscale-bias} (right panel). The figure also illustrates the slow decay of the RFF bias.

\section{Bias-free Scalable Gaussian Processes}
    We debias the estimates of both the GP training objective in \cref{eqn:log_lik} and its gradient in \cref{eqn:log_lik_deriv} (as approximated by CG and RFF) using unbiased randomized truncation estimators.
    To see how such estimators apply to GP hyperparameter learning, we note that both CG and RFF recover the true log marginal likelihood (or an unbiased estimate thereof) in their limits:
    \begin{obs}
        CG recovers the exact log marginal likelihood in expectation in at most $N$ iterations:
        \begin{align}
            \by^\top \trainK^{-1} \by &=  \by^\top  \left( \textstyle {\sum_{j=1}^N } \gamma_j \bd_j \right),
            \label{eqn:lim_cg_invquad}
            \\
            \logdet &=  {\textstyle \Evover{\bz}{\Vert \bz \Vert^2 \be_1^\top (\log \bT^{(N)}_\bz) \be_1} }.
            \label{eqn:lim_cg_logdet}
        \end{align}
        By the law of large numbers, RFF converges almost surely to the exact log marginal likelihood as the number of random features goes to infinity:
        \begin{align}
            \by^\top \trainK^{-1} \by &= \lim_{J \to \infty} \by^\top \widetilde{\bK}_J^{-1} \by,
            \label{eqn:lim_rff_invquad}
            \\
            \logdet &= \lim_{J \to \infty} \log \vert \widetilde{\bK}_J \vert.
            \label{eqn:lim_rff_logdet}
        \end{align}
    \end{obs}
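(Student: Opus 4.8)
\textit{Proof sketch.}
The plan is to split the four identities into two groups: the two CG statements, which hold \emph{exactly} after finitely many iterations in exact arithmetic, and the two RFF statements, which are almost-sure limits as $J \to \infty$. \cref{eqn:lim_cg_invquad} will follow immediately from the finite-termination property already recorded in \cref{eqn:cg_series}: running $N$ iterations of CG with matrix $\trainK$ and right-hand side $\by$ produces all $N$ terms of the expansion $\trainK^{-1}\by = \sum_{j=1}^{N}\gamma_j\bd_j$, so left-multiplying by $\by^\top$ gives the claim. No further work is needed there.

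For \cref{eqn:lim_cg_logdet} I would use that the Lanczos tridiagonalization underlying CG becomes a genuine orthogonal similarity once carried to $N$ steps. In exact arithmetic the Lanczos vectors $\bq_1 = \bz/\Vert\bz\Vert, \bq_2, \ldots, \bq_N$ form an orthonormal basis of $\reals^N$, so $\bQ^{(N)}_\bz$ is an orthogonal matrix with first column $\bz/\Vert\bz\Vert$ and $\bT^{(N)}_\bz = \bQ^{(N)\top}_\bz \trainK \bQ^{(N)}_\bz$. Since matrix functions commute with orthogonal similarity, $\log\bT^{(N)}_\bz = \bQ^{(N)\top}_\bz(\log\trainK)\bQ^{(N)}_\bz$; reading off the $(1,1)$ entry and multiplying by $\Vert\bz\Vert^2$ yields $\Vert\bz\Vert^2\be_1^\top(\log\bT^{(N)}_\bz)\be_1 = \bz^\top(\log\trainK)\bz$, so the estimate in \cref{eqn:logdet_estimate} becomes exact for each realization of $\bz$. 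Taking $\Evover{\bz}{\cdot}$ and using $\Evover{\bz}{\bz\bz^\top}=\bI$ together with $\tr(\log\trainK)=\logdet$ closes this part. (If Lanczos breaks down before step $N$, then $\bz$ already lies in a lower-dimensional $\trainK$-invariant subspace and the identical argument applies at the breakdown step, so nothing is lost.)

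For \cref{eqn:lim_rff_invquad} and \cref{eqn:lim_rff_logdet} I would write each entry of the RFF matrix as $[\widetilde{\bK}_J]_{ab} = \sigma^2\mathbb{I}_{a=b} + \tfrac{2}{J}\sum_{l=1}^{J/2}\cos\!\big(\bomega_l^\top(\bx_a-\bx_b)\big)$, an empirical average of $J/2$ i.i.d.\ bounded random variables with mean $k(\bx_a-\bx_b)$ (using $\cos\alpha\cos\beta+\sin\alpha\sin\beta=\cos(\alpha-\beta)$ and Bochner's theorem). The strong law of large numbers gives $[\widetilde{\bK}_J]_{ab}\to\trainK[a,b]$ almost surely; since there are only $N^2$ entries the convergence is joint, so $\widetilde{\bK}_J\to\trainK$ almost surely in any matrix norm. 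Because $\widetilde{\bK}_J\succeq\sigma^2\bI\succ 0$, the maps $\bA\mapsto\bA^{-1}$ and $\bA\mapsto\log|\bA|$ are continuous at the limit $\trainK$, so $\by^\top\widetilde{\bK}_J^{-1}\by\to\invquad$ and $\log|\widetilde{\bK}_J|\to\logdet$ almost surely, as claimed.

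I expect the only genuinely delicate step to be \cref{eqn:lim_cg_logdet}: one must verify that at iteration $N$ the Lanczos basis is actually complete — so that $\bQ^{(N)}_\bz$ is square and orthogonal rather than merely having orthonormal columns — and dispatch the (measure-zero) early-breakdown case; everything else is an appeal to standard facts (finite termination of CG, the SLLN, and continuity of inversion and log-determinant on the positive-definite cone). A full write-up is deferred to the appendix.
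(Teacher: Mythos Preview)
Your argument is correct in all four parts, but note that the paper does not actually supply a proof of this Observation: it is stated as a direct consequence of the background facts gathered in \cref{sec:CG_background} and \cref{RFF_background} and then used without further justification. In particular, the paper relies implicitly on the finite-termination property of CG recorded at \cref{eqn:cg_series} for \cref{eqn:lim_cg_invquad}, on the equality $\log|\trainK|=\Evover{\bz}{\bz^\top(\log\trainK)\bz}$ together with the completeness of the Lanczos basis after $N$ steps for \cref{eqn:lim_cg_logdet}, and on the law of large numbers for the RFF claims, exactly as you spell out.

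Your write-up therefore does not differ from the paper's approach so much as it \emph{supplies} the details the paper omits. The one place where you add genuine content beyond what the paper hints at is \cref{eqn:lim_cg_logdet}: making explicit that $\bQ^{(N)}_\bz$ is square and orthogonal (so that $\log\bT^{(N)}_\bz$ is an honest similarity transform of $\log\trainK$), and handling the measure-zero early-breakdown case. That is the right thing to flag, and your treatment of it is sound. The RFF argument via entrywise SLLN plus continuity on $\{\bA\succeq\sigma^2\bI\}$ is likewise the natural route and matches what the paper's phrasing ``by the law of large numbers'' is gesturing at.
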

    %
    

    To maintain the scalability of CG and RFFs while eliminating bias, we 
    express the log marginal likelihood terms in \cref{eqn:lim_cg_invquad,eqn:lim_cg_logdet,eqn:lim_rff_invquad,eqn:lim_rff_logdet} as summations amenable to randomized truncation.
    We then apply the Russian Roulette and Single Sample estimators of \cref{subsec:randomized_truncation_intro} to avoid computing all summation terms while obtaining the same result in expectation.

\subsection{Russian Roulette-Truncated CG (RR-CG)}
\label{sec:debias_cg}
The stochastic gradient in \cref{eqn:log_lik_deriv_stochastic} requires performing two solves: $\trainK^{-1} \by$ and $\trainK^{-1} \bz$.
Using the summation formulation of CG (Eq.~\ref{eqn:cg_series}), we can write these two solves as series:
$$
\textstyle
\trainK^{-1} \by = \sum_{i=1}^N \gamma_i \bd_i, \quad
\trainK^{-1} \bz = \sum_{i=1}^N \gamma_i' \bd_i',
$$
where each CG iteration computes a new term of the summation.
By applying the Russian Roulette estimator from \cref{eqn:rr}, we obtain the following unbiased estimates:
\begin{equation}\label{eqn:rrcg-linear-solve}
\textstyle
\begin{split}
    \textstyle
    \trainK^{-1} \by &\approx {\textstyle \sum_{j=1}^{J} } (\gamma_j 
    \bd_j) / \mathbb{P}(\mathcal J \geq j),
    \quad J \sim \mathbb{P}(J)
    \\
    \textstyle
    \trainK^{-1} \bz &\approx
    {\textstyle \sum_{j=1}^{J'} }
    (\gamma_j' \bd_j') / \mathbb{P}(\mathcal J \geq j). 
    \quad J' \sim \mathbb{P}(J'),
\end{split}
\end{equation}
These unbiased solves produce an unbiased optimization gradient in \cref{eqn:log_lik_deriv_stochastic}; we refer to this approach as Russian Roulette CG ({\bf RR-CG}).
With the appropriate truncation distribution $\mathbb{P}(J)$, this estimate affords the same computational complexity of standard CG \emph{without} its bias.

We must compute two independent estimates of $\trainK^{-1} \by$ with different $J \sim \mathbb{P}(J)$ in order for the $\by^\top \trainK^{-1} \frac{ \partial \trainK }{ \partial \btheta } \trainK^{-1} \by$ term in \cref{eqn:log_lik_deriv_stochastic} to be unbiased.
Thus, the unbiased gradient requires 3 calls to RR-CG, as opposed to the 2 CG calls needed for the biased gradient.
Nevertheless, RR-CG has the same $\bigo{JN^2}$ complexity as standard CG -- and the additional solve can be computed in parallel with the others.

We can also use the Russian Roulette estimator to compute the log marginal likelihood itself, though this is not strictly necessary for gradient-based optimization.
(See appendix.)

\paragraph{Choosing the truncation distribution.}
Since the Russian Roulette estimator is unbiased for any choice of
$\mathbb{P}\left(J\right)$, we wish to choose a truncation distribution that balances
computational cost and variance\footnote{Solely minimizing variance is not appealing,
as this is achieved by $\mathbb{P}\left(J\right) = \mathbb{I}_{J=N}$ which has no
computational savings.}.
\citet{beatson2019efficient} propose the \emph{relative optimization
efficiency} (ROE) metric, which is the ratio of the expected improvement of taking an optimization step with
our gradient estimate to its computational cost.
A critical requirement of the ROE analysis is the expected rate of decay 
of our approximations in terms of the number of CG iterations $J$. 
We summarize our estimates and choices of distribution as follows:
\begin{theorem}\label{th:optimalrrcg}
  The approximation to $\logdet$ and to $\invquad$ using RR-CG decays at a rate of
  $\bigo{C^{-2J}}$. Therefore the truncation distribution that maximizes
  the ROE is $\mathbb{P}^*(J) \propto C^{-2J}$, where $C$ is a constant that depends on the conditioning of $\trainK$.
  The expected computation and variance of  $\mathbb{P}^*\left(J\right)$ is finite.
\end{theorem}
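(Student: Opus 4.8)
\textit{Proof plan.} I would split the statement into its three assertions, the first of which powers the other two.

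\emph{Decay rate.} First I would recast each RR-CG estimate as a telescoping series whose $J$-th partial sum is the ordinary CG/SLQ estimate: set $u_J=\sum_{j=1}^{J}\Delta^{(u)}_j$ with $\Delta^{(u)}_j=u_j-u_{j-1}$ (and $u_0=0$), and similarly $v_J=\sum_{j=1}^{J}\Delta^{(v)}_j$. The tail of the first series is exactly the CG truncation error, $R^{(u)}_j=\invquad-u_{j-1}$, and \cref{CG_bias_theorem} already shows $|R^{(u)}_j|=\bigo{C^{-2j}}$, where $C>1$ is the standard CG convergence constant (e.g.\ $C=(\sqrt{\kappa}+1)/(\sqrt{\kappa}-1)$ with $\kappa$ the condition number of $\trainK$). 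For the log-determinant term the analogous tail $R^{(v)}_j=v_N-v_{j-1}$ is controlled by the Stochastic Lanczos Quadrature bounds of \citet{ubaru2017fast}, which give the same $\bigo{C^{-2j}}$ rate (indeed the one already quoted in \cref{CG_bias_theorem}); taking $\Evover{\bz}{\cdot}$ recovers the $\logdet$ bias. Since $|\Delta_j|\le|R_j|+|R_{j+1}|$, the increments of both series also decay geometrically at rate $\bigo{C^{-2j}}$, which is the first assertion.

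\emph{ROE-optimal distribution and finiteness.} I would then feed this rate into the relative-optimization-efficiency (ROE) analysis of \citet{beatson2019efficient}, whose inputs are the decay rate of the telescoped terms (now $\bigo{C^{-2j}}$) and the per-iteration compute of CG, a constant $\bigo{N^2}$ so that the cost after $J$ iterations is $\bigo{JN^2}$. For the Russian Roulette estimator $\bar\psi_J=\sum_{j\le J}\Delta_j/\mathbb{P}(\mathcal{J}\ge j)$ the variance telescopes (in the scalar $\invquad,\logdet$ case) to $\sum_j(R_j^2-R_{j+1}^2)/\mathbb{P}(\mathcal{J}\ge j)$ up to an additive constant, while the expected compute is $\bigo{N^2}\sum_j\mathbb{P}(\mathcal{J}\ge j)$; maximizing the ROE reduces to minimizing the product of these two quantities over decreasing survival sequences, and a Cauchy--Schwarz/Lagrange-multiplier argument identifies the minimizer $\mathbb{P}^*(\mathcal{J}\ge j)\propto\sqrt{R_j^2-R_{j+1}^2}=\Theta(C^{-2j})$. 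Since a geometric survival function has a geometric pmf, $\mathbb{P}^*(J)=\mathbb{P}^*(\mathcal{J}\ge J)-\mathbb{P}^*(\mathcal{J}\ge J+1)\propto C^{-2J}(1-C^{-2})\propto C^{-2J}$, the second assertion (and the same shape arises with the single-sample estimator, whose optimal pmf is $\propto|\Delta_j|=\bigo{C^{-2j}}$). Finiteness is then immediate: $\sum_j\mathbb{P}^*(\mathcal{J}\ge j)$ is a convergent geometric series (ratio $C^{-2}<1$), so the expected compute is finite, and each variance term satisfies $(R_j^2-R_{j+1}^2)/\mathbb{P}^*(\mathcal{J}\ge j)\le R_j^2/\mathbb{P}^*(\mathcal{J}\ge j)=\bigo{C^{-4j}}/\Theta(C^{-2j})=\bigo{C^{-2j}}$, which is summable.

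\emph{Main obstacle.} The load-bearing step is the middle one: making the appeal to \citet{beatson2019efficient} airtight requires verifying the regularity conditions of their ROE framework --- summability of the telescoped norms, legitimacy of the variance/compute surrogate being minimized, and control of the cross-covariance terms (automatic for the scalar $\invquad$ and $\logdet$ estimates, but needing care if the same distribution is also used for the vector solves $\trainK^{-1}\by$ and $\trainK^{-1}\bz$). A secondary subtlety is that \cref{CG_bias_theorem} supplies only an \emph{upper} bound on the truncation error, so $\mathbb{P}^*\propto C^{-2J}$ is ROE-optimal for that rate bound rather than for the exact error sequence; fortunately the finiteness claims only use this upper bound.
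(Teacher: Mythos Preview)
Your proposal is correct and follows essentially the same route as the paper: establish the $\bigo{C^{-2J}}$ decay of the CG/SLQ increments via \cref{CG_bias_theorem} (and \citet{ubaru2017fast}), then plug that rate and the constant per-iteration CG cost into the ROE-optimal truncation result of \citet{beatson2019efficient} (their Theorem~5.4) to read off $\mathbb{P}^*(\mathcal{J}\ge j)\propto C^{-2j}$, hence $\mathbb{P}^*(J)\propto C^{-2J}$. You supply more detail than the paper does---in particular the explicit geometric-series arguments for finite expected compute and variance, which the appendix proof omits---but the skeleton is identical.
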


\textit{Proof sketch.}  
\citet{beatson2019efficient} show that the truncation distribution that maximizes the ROE is proportional to the rate of decay of our approximation 
divided by its computational cost. The error of CG 
decays as $\bigo{C^{-2J}}$, and the cost of each summation term is constant with respect to $J$. 
In practice, we vary the exponential decaying rate to control the expectation and variance of $\mathbb P(J)$.
To further reduce the variance of our estimator, we set a minimum number of CG iterations to be computed, as in \cite{luo2020sumo}.
See appendix for full proof. 


In practice, however, we do not have access to $C$ since
computing the conditioning of $\trainK$ is impractical. Yet, we can change the base of the exponential to $e$ and add a temperature parameter
$\lambda$ to rescale the function and control the rate of decay of the truncation distribution as a sensible alternative.
Thus, we follow a more general exponential decay distribution: 
\begin{align}\label{eqn:exp-decay-dist}
    \mathbb{P} (J) & \propto e^{-\lambda J}, \qquad J=J_{\min}, \cdots, N
\end{align}
where  $J_{\min}$ is the minimum truncation number. By varying the values of $\lambda$ and $J_{\min}$, we can control the expectation and standard deviation of $\mathbb{P}(J)$. In practice we found that having the standard deviation value between 10 and 20 achieves stable GP learning process, which can be obtained by tuning $\lambda$ between $0.05$ and $0.1$ for sufficiently large datasets (e.g. $N \geq 500$). We also noticed that the method is not sensitive to these choices of hyperparameters and that they work well across all the experiments.
The expected truncation number can be further tuned by varying  $J_{\min}$. 
We emphasize that these choices impact the speed-variance tradeoff. By setting a larger $J_{\min}$ we decrease the speed
by requiring more baseline computations but also decrease the variance (since the minimum approximations have the largest deviations from 
the ground truth).

\paragraph{Toy problem.}
In \cref{fig:debias_cg_rff} we plot the empirical mean of the RR-CG estimator using $10^4$ samples from an exponential truncation distribution.
We find that RR-CG produces unbiased estimates of the $\invquad$ and $\logdet$ terms that are indistinguishable from the exact values computed with Cholesky.
Reducing the expected truncation iteration $\mathbb{E}(J)$ (x-axis) increases the standard error of empirical means, demonstrating the speed-variance trade-off. 

\subsection{Single Sample-Truncated RFF (SS-RFF)}

Denoting $\widetilde \bK_j$ as the kernel matrix estimated by $j$ random Fourier features, we can write $\logdet$ as the following telescoping series:
\begin{align}
    \logdet &= { \log \vert \widetilde \bK_{1} \vert + \sum_{j=2}^{N/2 - 1} \left( \log \vert \widetilde \bK_j \vert - \log \vert \widetilde \bK_{j-1} \vert \right) } \\
    &+ \logdet - \log \vert \widetilde \bK_{N/2-1} \vert
    \nonumber
    \\
    &= {\textstyle \log \vert \widetilde \bK_1 \vert + \sum_{j=2}^{N/2} \Delta_j, }
    \label{eqn:rff_logdet_series} 
\end{align}
where $\Delta_j$ is defined as $\log \vert \widetilde \bK_j \vert - \log \vert \widetilde \bK_{j-1} \vert$ for all $j < N/2$, and $\Delta_{N/2}$ is defined as $\log \vert \trainK \vert - \log \vert \widetilde \bK_{N/2 - 1} \vert$. Note that each $\Delta_j$ is now a random variable, since it depends on the \textit{random} Fourier frequencies $\omega$.
Crucially, we only include $N/2$ terms in the series so that no term requires more than $\bigo{N^3}$ computation in expectation.
(For any $j > N/2$, $\widetilde \bK_{j}$ is a full-rank matrix and thus is as computationally expensive as the true $\trainK$ matrix.) 
We construct a similar telescoping series for $\invquad$.

As with \cref{eqn:rrcg-linear-solve}, we approximate the series in \cref{eqn:rff_logdet_series} with a randomized truncation estimator, this time using the Single Sample estimator \eqref{eqn:ss_estimator}:
\begin{align}
    \logdet \approx \log \vert \widetilde \bK_1 \vert +
    \Delta_{J} / \mathbb{P}\left(J\right).
    \label{eqn:ssrff_est}
\end{align}
where $J$ is drawn from the truncation distribution $\mathbb{P}(J)$ with support over $\{ 2, 3, \ldots N/2 \}$.
Note that the Single Sample estimate requires computing 3 log determinants ($\log \vert \widetilde \bK_1 \vert$, $\log \vert \widetilde \bK_{J-1} \vert$, and $\log \vert \widetilde \bK_{J} \vert$) for a total of $\bigo{J^3 + NJ^2}$ computations and $\bigo{NJ}$ memory.
This is asymptotically the same requirement as standard RFF.
The Russian Roulette estimator, on the other hand, incurs a computational cost of $\bigo{NJ^3 + J^4}$ as it requires computing ($\log \vert \widetilde \bK_1 \vert$ through $\log \vert \widetilde \bK_{J} \vert$) which quickly becomes impractical for large $J$.

A similar Single Sample estimator constructs an unbiased estimate of $\invquad$.
Backpropagating through these Single Sample estimates produces unbiased estimates of the log marginal likelihood gradient in \cref{eqn:log_lik_deriv}.

\paragraph{Choosing the truncation distribution.}
For the Single Sample estimator we do not have to optimize the ROE
since minimizing the variance of this estimator does not result in a degenerate distribution.
%
\begin{theorem}\label{th:optimalssrff}
  The truncation distribution that
  minimizes the variance of the SS-RFF estimators for $\logdet$ and $\invquad$ is
        $\mathbb{P}^{*}\left(J\right)
        \propto
        1/J$.
  The expected variance and computation of $\mathbb{P}^{*}(J)$ is finite.
\end{theorem}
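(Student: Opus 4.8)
The plan is to reduce variance-minimization to a one-dimensional constrained optimization over $\mathbb{P}(J)$, and then to pin down the decay rate of the telescoping increments $\Delta_j$ using the same second-order RFF expansion that underlies \cref{RFF_bias_theorem}. Recall the Single Sample estimator $\bar\psi_J = \log|\widetilde\bK_1| + \Delta_J/\mathbb{P}(J)$ for $\logdet$ (the $\invquad$ case uses its own telescoping series but is otherwise identical). Since the truncation index $J$ is drawn independently of the random frequencies $\omega$, I would expand $\mathrm{Var}[\bar\psi_J]$ by the law of total variance conditioned on $\omega$: for fixed $\omega$, $\mathrm{Var}_J[\Delta_J/\mathbb{P}(J)] = \sum_j \Delta_j^2/\mathbb{P}(j) - \big(\sum_j \Delta_j\big)^2$, and because the telescoping identity $\sum_{j=2}^{N/2}\Delta_j = \logdet - \log|\widetilde\bK_1|$ holds exactly, taking the outer $\omega$-expectation leaves $\sum_j \mathbb{E}_\omega[\Delta_j^2]/\mathbb{P}(j)$ as the \emph{only} term that depends on $\mathbb{P}$; the $\omega$-variance of $\log|\widetilde\bK_1|$ and its cross-covariance with the additive $\log|\widetilde\bK_1|$ term are $\mathbb{P}$-free constants. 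Minimizing $\mathrm{Var}[\bar\psi_J]$ over $\{\sum_j \mathbb{P}(j) = 1\}$ is therefore equivalent to minimizing $\sum_{j=2}^{N/2} a_j/\mathbb{P}(j)$ with $a_j \defeq \mathbb{E}_\omega[\Delta_j^2] \ge 0$, and one line of Cauchy--Schwarz (or a Lagrange multiplier) gives the minimizer $\mathbb{P}^*(j) \propto \sqrt{a_j} = \sqrt{\mathbb{E}_\omega[\Delta_j^2]}$, attaining optimal variance $\big(\sum_j \sqrt{a_j}\big)^2$. The same reduction applies verbatim to $\invquad$.

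It then remains to show $\mathbb{E}_\omega[\Delta_j^2] = \Theta(1/j^2)$, which yields $\mathbb{P}^*(j) \propto 1/j$. The telescoping structure fixes a single nested sequence $\widetilde\bK_1 \subset \widetilde\bK_2 \subset \cdots$, so $\widetilde\bK_j$ is $\widetilde\bK_{j-1}$ with one more random feature appended to the normalized average; hence $\widetilde\bK_j - \widetilde\bK_{j-1} = \tfrac1j \mathbf{U}_j$ where $\mathbf{U}_j$ is a bounded, low-rank matrix. A first-order expansion of the log-determinant gives $\Delta_j = \tfrac1j \tr{\widetilde\bK_{j-1}^{-1}\mathbf{U}_j} + \bigo{1/j^2}$, so $\mathbb{E}_\omega[\Delta_j^2] = \tfrac1{j^2}\,\mathbb{E}_\omega[(\tr{\widetilde\bK_{j-1}^{-1}\mathbf{U}_j})^2] + \bigo{1/j^3}$; since $\widetilde\bK_{j-1} \to \trainK$ almost surely and $\mathbf{U}_j$ has nondegenerate conditional second moment, the leading coefficient stays bounded above and below, so $\mathbb{E}_\omega[\Delta_j^2] = \Theta(1/j^2)$. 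The squared bias is lower order: the finer Taylor expansion behind \cref{RFF_bias_theorem} gives $\mathbb{E}_\omega[\log|\widetilde\bK_j|] - \logdet = \alpha/j + \bigo{1/j^2}$, so $(\mathbb{E}_\omega[\Delta_j])^2 = \bigo{1/j^4} \ll 1/j^2$ --- it is the \emph{variance} of $\Delta_j$, not its mean, that sets the rate. Running the identical argument on the $\invquad$ telescoping series gives the same rate, so $\mathbb{P}^*(J) \propto 1/J$ for both estimators; in particular its support is all of $\{2, \dots, N/2\}$, which also confirms the non-degeneracy mentioned in the text.

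Finiteness is then immediate and can be made quantitative: the normalizer $\sum_{j=2}^{N/2} 1/j = \Theta(\log N)$ is finite, the optimal variance $\big(\sum_j \sqrt{a_j}\big)^2 = \bigo{\log^2 N}$, and the expected cost $\sum_j \mathbb{P}^*(j)\,\bigo{j^3 + Nj^2} = \bigo{N^3}$, matching standard RFF. The step I expect to be the main obstacle is making $\mathbb{E}_\omega[\Delta_j^2] = \Theta(1/j^2)$ rigorous rather than heuristic: the remainder of the log-determinant expansion must be controlled uniformly in $j$, which requires $\widetilde\bK_{j-1}$ to stay well-conditioned (the $\sigma^2\bI$ jitter in $\trainK$ gives a uniform lower bound on its eigenvalues), and one must secure a uniform \emph{lower} bound on $\mathbb{E}_\omega[(\tr{\widetilde\bK_{j-1}^{-1}\mathbf{U}_j})^2]$ so that the proportionality is exact and not merely an upper bound. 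The boundary term $\Delta_{N/2} = \logdet - \log|\widetilde\bK_{N/2-1}|$ has to be handled separately, but being a single term it does not change the asymptotic shape of $\mathbb{P}^*$.
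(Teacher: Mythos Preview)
Your argument is correct and reaches the same conclusion, but the route differs from the paper's in two respects. First, on the optimization step: the paper computes the variance over $J$ alone, treating the $\Delta_j$ as fixed, obtains $\mathbb{P}^*(j)\propto|\Delta_j|$ via a Lagrangian, and then substitutes the $\mathcal{O}(1/j)$ decay of $\Delta_j$. You instead decompose the \emph{total} variance over both $J$ and $\omega$ via the law of total variance and identify the minimizer as $\mathbb{P}^*(j)\propto\sqrt{\mathbb{E}_\omega[\Delta_j^2]}$; this is the more principled formulation given that the $\Delta_j$ are genuinely random, though the two collapse to the same answer once the rate is known. Second, on establishing the rate: the paper bounds $|\Delta_j|=\mathcal{O}(1/j)$ via exact rank-one update identities --- Sherman--Morrison for $\invquad$ and the matrix determinant lemma for $\logdet$ --- whereas you argue $\mathbb{E}_\omega[\Delta_j^2]=\Theta(1/j^2)$ through a first-order Taylor expansion of the matrix functional. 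The paper's exact formulas neatly sidestep the remainder-control issue you correctly flag as the main obstacle, but they give only an upper bound and do not address the two-sided $\Theta$ estimate or the explicit finiteness-of-variance/computation bounds you provide; in that sense your sketch actually goes further than the paper toward justifying that $\mathbb{P}^*(J)\propto 1/J$ is genuinely variance-optimal rather than a heuristic proxy.
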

\textit{Proof sketch.} The minimum variance distribution can be found by solving a constrained optimization problem. In
practice, we can further decrease the variance of our estimator by fixing a minimum value of
RFF features to be used in \cref{eqn:rff_logdet_series} and by increasing the step size
($c \in \mathbb{N}$) between the elements at each $\Delta_{j}$ =
$\log|\widetilde{\bK}_{cJ}| - \log|\widetilde{\bK}_{c(J-1)}|$. See appendix for full proof.

For the experiments we started with 500
features and also tried various step sizes $c \in \{1, 10, 100 \}$. The variance of the
estimator decreases as we increase $c$ since the probability weights will decrease in
magnitude. Yet, despite of using the optimal truncation distribution, setting a high
number of features and taking long steps $c=100$, the variance of the estimator still
requires taking several steps before converging, making SS-RFF computationally
impractical.

\paragraph{Toy problem.} Similar to RR-CG, in \cref{fig:debias_cg_rff} we plot the empirical mean of the SS-RFF estimator using $10^4$ samples.
We find that SS-RFF produces unbiased estimates of the $\invquad$ and $\logdet$ terms. However, these estimates have a higher variance when compared to the estimates of RR-CG. Reducing the expected truncation iteration $\mathbb{E}(J)$ (x-axis) increases the standard error of the empirical means, demonstrating the speed-variance trade-off.

\subsection{Analysis of the Bias-free Methods}

Randomized truncations and conjugate gradients have existed for many decades
\citep{hestenes1952methods,kahn1955use}, but have rarely been used in conjunction. 
\citet{filippone2015enabling} proposed a method closely related to our RR-CG which performs randomized early-truncation of CG iterations to obtain unbiased posterior samples of the GP covariance parameters. We differ by tackling the GP hyperparameter learning problem: we provide the first theoretical analysis of the biases incurred by both CG and RFF, and proceed to tailor unbiased estimators for each method.

To some extent, randomized truncation methods are antithetical to the original intention of CG: producing
 deterministic and nearly exact solves.
For large-scale
applications, where early truncation is necessary for computational tractability,
the ability to trade bias for variance is beneficial.
This fact is especially true in the context of GP learning, where the bias of early truncation is systematic and cannot be simply explained away as numerical imprecision.

Randomized truncation estimates are often used to estimate infinite series, where it is challenging to design truncation distributions with finite expected computation and/or variance.
We avoid such issues since CG and the telescoping RFF summations are both finite.


\section{Results}\label{sec:results}
First, we show that our bias-free methods recover nearly the same hyperparameters as exact methods (i.e. Cholesky-based optimization), whereas models that use CG and RFF converge to suboptimal hyperparameters. 
Since RR-CG and SS-RFF eliminate bias at the cost of increased variance, we then demonstrate the optimization convergence rate and draw conclusions on our methods' applicability.
Finally, we compare models optimized with RR-CG against a host of approximate GP methods across
a wide range of UCI datasets \cite{asuncion2007uci}.
All experiments are implemented in GPyTorch \citep{gardner2018gpytorch}

\begin{figure}[!t]
\vskip 0.1in
\begin{center}
\includegraphics[scale=0.25]{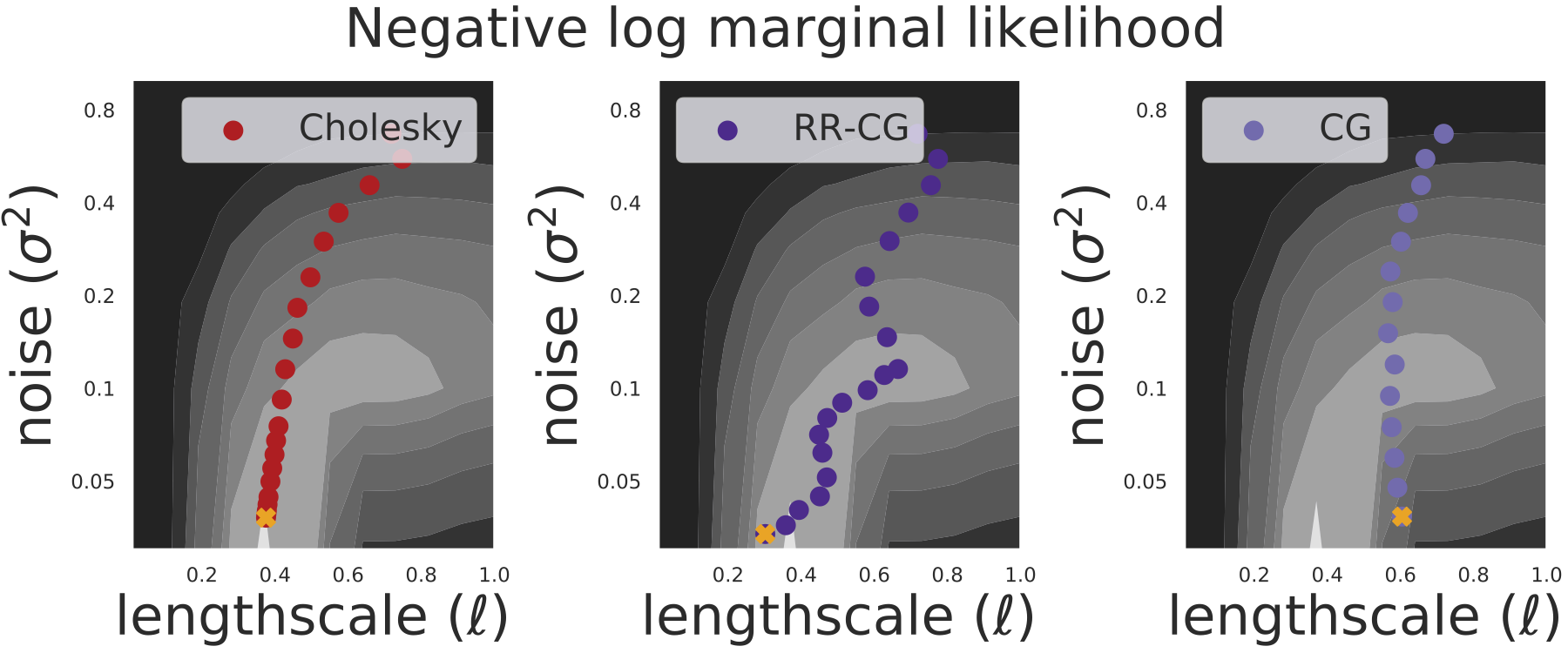}
\\
\includegraphics[scale=0.25]{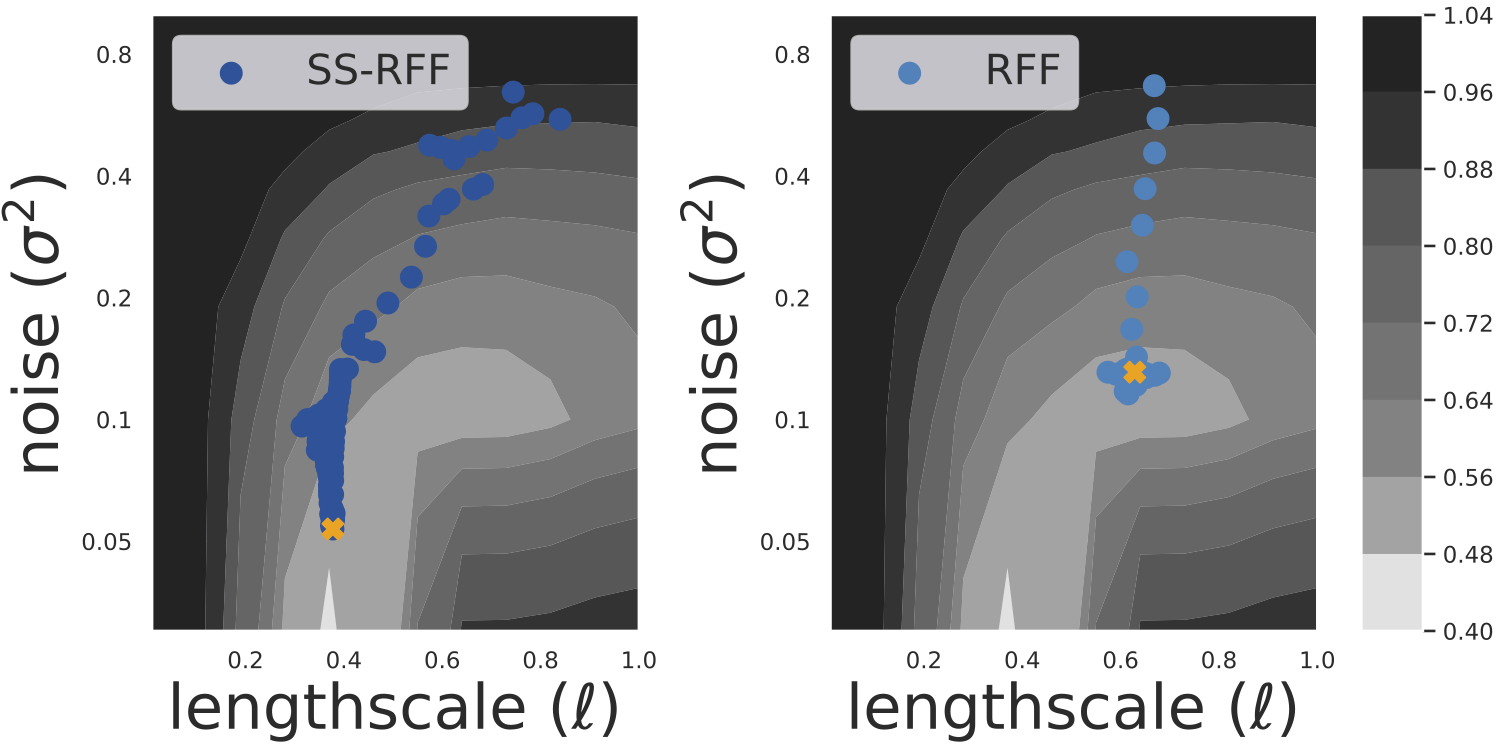}
\caption{
  Optimization landscape of a GP with two hyperparameters.
  The SS-RFF and RR-CG models converge to similar hyperparameter values that are nearly optimal, while
  the RFF and CG models converge to suboptimal solutions. In addition, the stochastic effect of the randomized truncation is visible in the trajectories of RR-CG and SS-RFF. Moreover,
  CG (and RR-CG) models truncate after $20$ iterations (in expectation); RFF (and SS-RFF) models use $700$ features (in expectation).
}
\label{fig:loss_landscapes}
\end{center}
\vskip -0.2in
\end{figure}

\paragraph{Optimization trajectories of bias-free GP.}
\label{res:loss-landscape-optim}
\cref{fig:loss_landscapes} displays the optimization landscape -- the log marginal likelihood of the PoleTele dataset -- as a function of  (RBF) kernel lengthscale $\ell$ and noise $\sigma^2$.
As  expected, an exact GP (optimized using Cholesky, \cref{fig:loss_landscapes} upper left) recovers the optimum.
Notably, the GP trained with standard CG and RFF converges to suboptimal hyperparameters (upper right/lower right).
RR-CG and SS-RFF models (trained with 20 iterations and 700 features in expectation, respectively) successfully eliminate this bias, and recover nearly the same parameters as the exact model (upper center/lower left).
These plots also show the speed-variance tradeoff of randomized truncation.
SS-RFF and RR-CG have noisy optimization trajectories due to auxiliary truncation variance.

\paragraph{Convergence of GP hyperparameter optimization.}
\label{res:loss_in_training}
\cref{fig:loss_evolution_cg,fig:loss_evolution_rff} plot the exact GP log marginal likelihood of the parameters learned by each method during optimization.
Each trajectory corresponds to a RBF-kernel GP trained on the PoleTele dataset (\cref{fig:loss_evolution_rff}) and the Bike dataset (\cref{fig:loss_evolution_cg}).

\begin{figure}[t!]
\vskip 0.1in
\begin{center}
\includegraphics[scale=0.4]{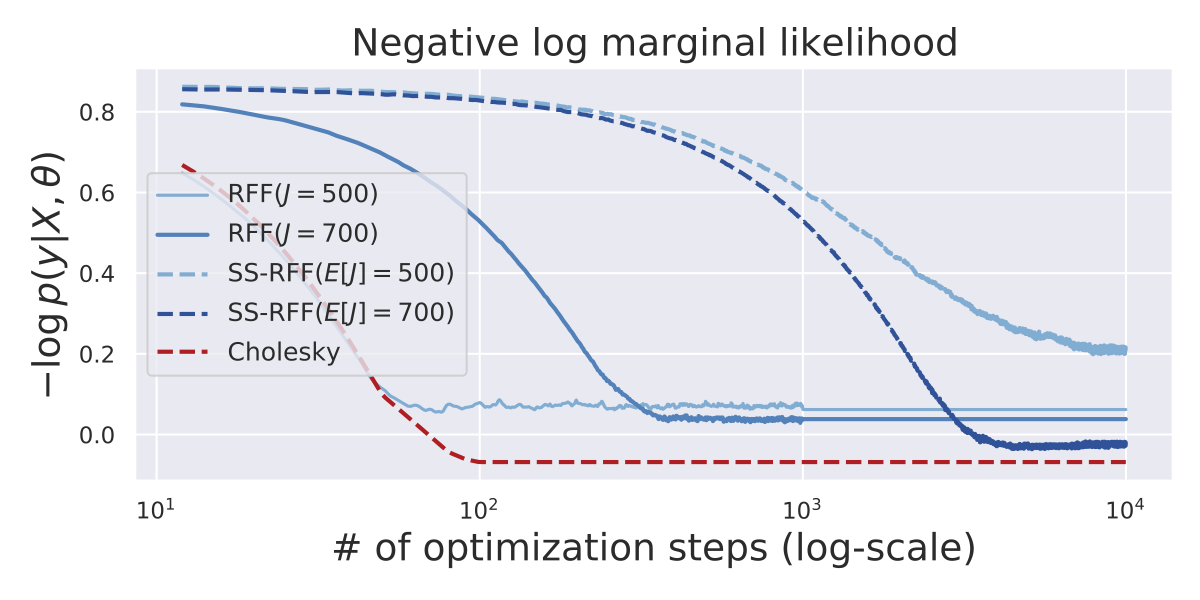}
\caption{
The GP optimization objective for models trained with RFF and SS-RFF. 
(PoleTele dataset, RBF kernel, Adam optimizer.)
RFF models converge to sub-optimal log marginal likelihoods.
SS-RFF models converge to (near) optimum values, yet require more than $100\times$ as many optimization steps.
}
\label{fig:loss_evolution_rff}
\end{center}
\vskip -0.2in
\end{figure}
\begin{figure}[t!]
\vskip 0.1in
\begin{center}
\includegraphics[scale=0.4]{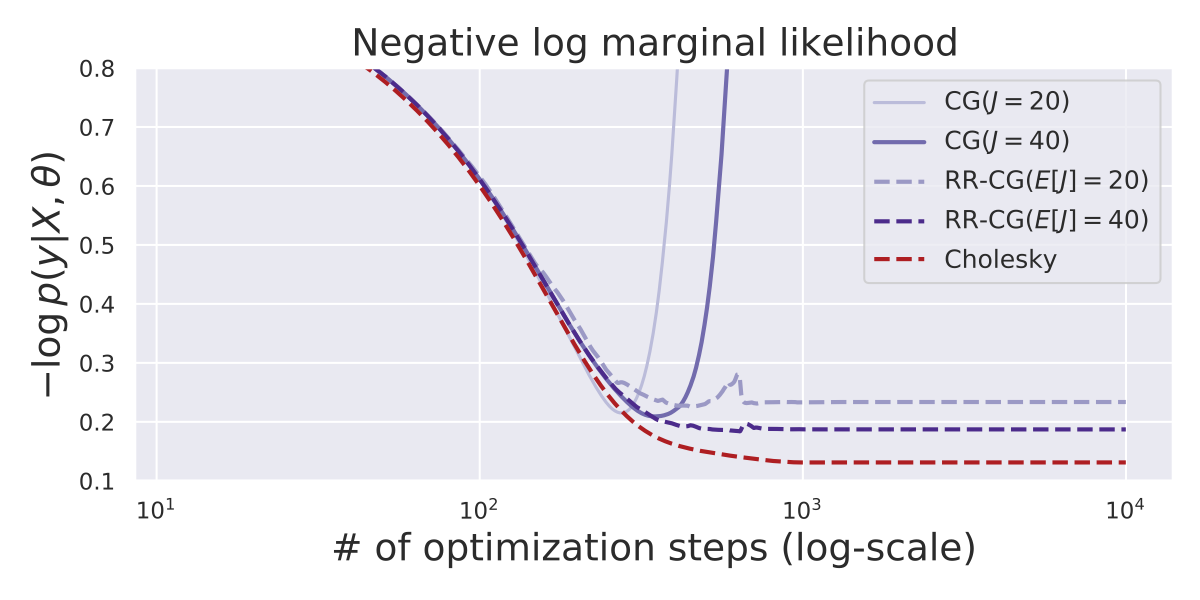}
\caption{
The GP optimization objective for models trained with CG and RR-CG. 
(Bike dataset, RBF kernel, Adam optimizer.)
RR-CG models converge to optimal solutions, while the (biased) CG models diverge.
Increasing the expected truncation of RR-CG only slightly improves optimization convergence; models converge in $<100$ steps of Adam.
}
\label{fig:loss_evolution_cg}
\end{center}
\vskip -0.2in
\end{figure}

\begin{figure*}[ht]
\vskip 0.1in
\begin{center}
\includegraphics[scale=0.45]{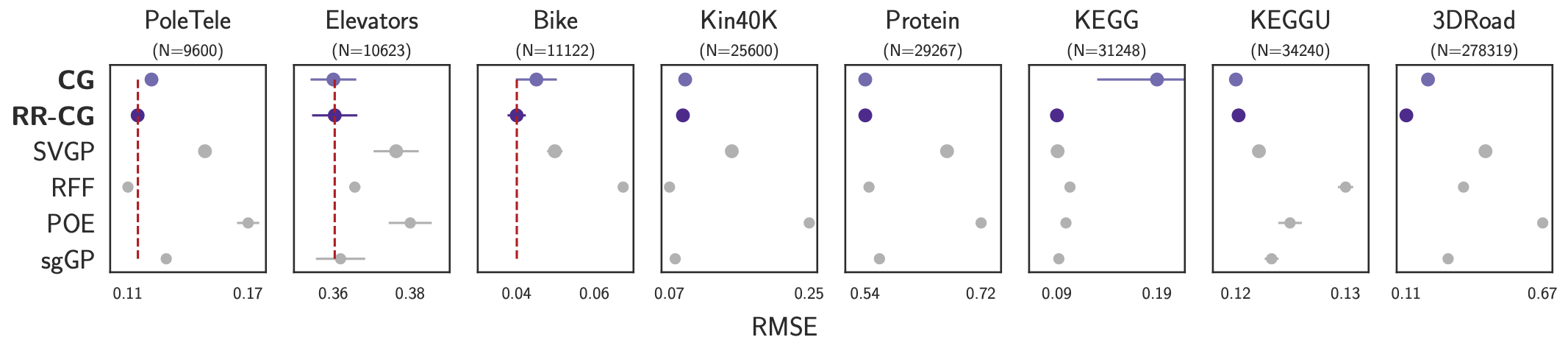}
\includegraphics[scale=0.45]{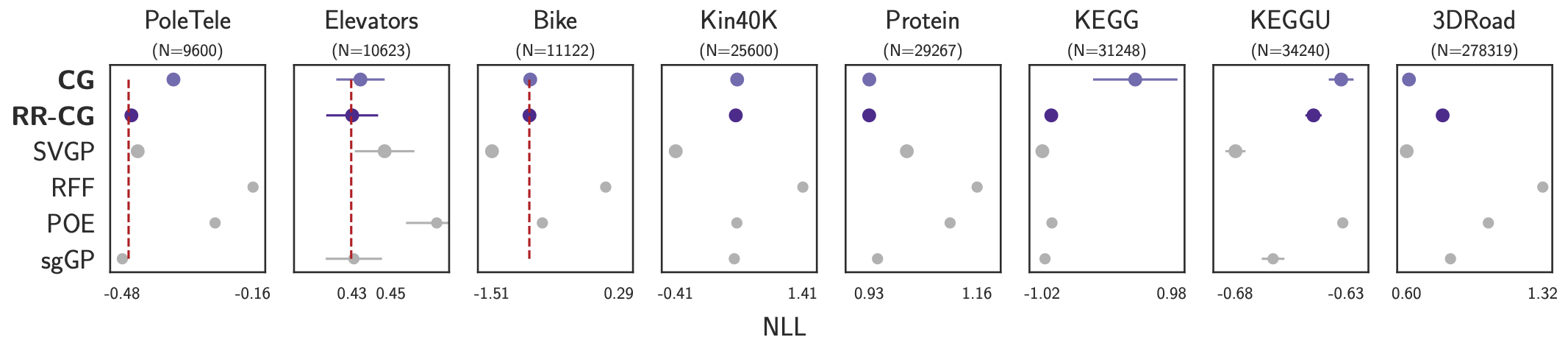}
 \caption{ Root-mean-square-error (RMSE) and negative log likelihood (NLL) of GP trained with CG (light purple), RR-CG (dark purple) and various approximate methods (grey). Dashed red lines indicates Cholesky-based GP performance (when applicable). Results are averaged over 3 dataset splits. Missing RFF and sgGP results correspond to (very high) outlier NLL / RMSE values. In almost all experiments, GP learning with RR-CG achieves similar or better performance compared to that with CG at the same computational cost.}
\label{fig:nll_rmse}
\end{center}
\vskip -0.2in
\end{figure*}

\cref{fig:loss_evolution_rff} shows that RFF models converge to solutions with worse log likelihoods, and more RFF features slow the rate of optimization.
Additionally, we see the cost of the auxiliary variance needed to debias SS-RFF: while SS-RFF models achieve better optima than their biased counterparts, they take 2-3 orders of magnitude longer to converge, despite using a truncation distribution that minimizes variance.
We thus conclude that SS-RFF has too much variance to be practical for GP hyperparameter learning.

\cref{fig:loss_evolution_cg} on the other hand shows that RR-CG is minimally affected by its auxiliary variance. The GP trained with RR-CG converges in roughly $100$ iterations, nearly matching Cholesky-based optimization.
Decreasing the expected truncation value from $\mathbb{E}[J] = 40$ to $20$ slightly slows this convergence.
We note that the bias induced by standard CG can be especially detrimental to GP learning.
On this dataset, the biased models deviate from their Cholesky counterparts and eventually diverge away from the optimum.

%

\paragraph{Predictive performance of bias-free GP.}
Lastly, we compare the predictive performance of GPs that use RR-CG, CG, and Cholesky for hyperparameter optimization.
We emphasize that the RR-CG and CG methods only make use of early truncation approximations during training.
At test time, we compute the predictive posterior by running CG to a tolerance of $\leq 10^{-4}$, which we believe can be considered ``exact'' for practical intents and purposes.
Additionally, we include four other (biased) scalable GP approximations methods as baselines: {\bf RFF}, Stochastic Variational Gaussian Processes ({\bf SVGP}) \citep{hensman2013gaussian}, generalized Product of Expert Gaussian Processes ({\bf POE}) \citep{cao2014generalized,deisenroth2015distributed}, and stochastic gradient-based Gaussian Processes ({\bf sgGP}) \citep{chen2020stochastic}.
We note that the RFF, SVGP, and sgGP methods introduce \emph{both} bias and variance, as these methods rely on randomization and approximation.

We use CG with $J=100$ iterations, and RR-CG with $\mathbb{E}[J]=100$ expected iterations; both methods use the preconditioner of \citet{gardner2018gpytorch}.
All RFF models use $700$ random features.
For SVGP, we use $1{,}024$ inducing points and minibatches of size $1{,}024$ as in \cite{wang2019exact}.
The POE models are comprised of GP experts that are each trained on $1{,}024$ data point subsets.
For sgGP, the subsampled datasets are constructed by selecting a random point $\bx, y$ and its 15 nearest neighbors as in \cite{chen2020stochastic}.
Each dataset is randomly split to  64\% training,  16\% validation and 20\% testing sets.
All kernels are RBF with a separate lengthscale per dimension. See appendix for more details. 

We report prediction accuracy (RMSE) and negative log likelihood (NLL) in \cref{fig:nll_rmse} (see appendix for full tables on predictive performance and training time).
We make two key observations: \emph{(i)} RR-CG meaningfully debiases CG. When the bias of CG is not detrimental to optimization (e.g. CG with 100 iterations is close to convergence for the Elevators dataset), RR-CG has similar performance. However, when the CG bias is more significant (e.g. the KEGG dataset), the bias-free RR-CG improves the GP predictive RMSE and NLL. We also include a figure displaying the predictive performance of RR-CG and CG with increasing number of (expected) CG iterations in appendix.  \emph{(ii)} RR-CG recovers the same optimum as the ``ground-truth'' method (i.e. Cholesky) does, as indicated by the red-dashed line in \cref{fig:nll_rmse}. This result provides additional evidence that RR-CG achieves unbiased optimization. 

While RR-CG obtains the lowest RMSE on all but 2 datasets, we note that the (biased) GP approximations sometimes achieve lower NLL. For example, SVGP has a lower NLL than that of RR-CG on the Bike dataset, despite having a higher RMSE.
We emphasize that this is not a failing of RR-CG inference.
The SVGP NLL is even better than that of the exact (Cholesky) GP, suggesting a potential model misspecification for this particular dataset. Since SVGP overestimates the observational noise $\sigma^2$ \citep{bauer2017understanding}, it may obtain a better NLL when outliers are abundant.
Though we cannot compare against the Cholesky posterior on larger datasets, we hypothesize that the NLL/RMSE discrepancy on these datasets is due to a similar modeling issue.

\section{Conclusion}
We prove that CG and RFF introduce systematic biases to the GP log marginal likelihood objective:
CG-based training will favor underfitting models, while RFF-based training will promote overfitting.
Modifying these methods with randomized truncation converts these biases into variance, enabling unbiased stochastic optimization.
Our results show that this bias-to-variance exchange indeed constitutes a trade-off.
The convergence of SS-RFF is impractically slow, likely due to the truncation variance needed to eliminate RFF's slowly-decaying bias.
However, for CG-based training, we find that variance is almost always preferable to bias.
Models trained with RR-CG achieve better performance than those trained with standard CG, and tend to recover the hyperparameters learned with exact methods.
Though models trained with CG do not always exhibit noticeable bias, RR-CG's negligible computational overhead is justifiable to counteract cases where the bias is significant.

We reported experiments with at most 300K observations for our methods and baselines, which is substantial for GPs. We emphasize that RR-CG can be extended to datasets with over one million data points as in \citet{wang2019exact}. However, the computational cost is much higher, requiring multiple GPUs for training and testing.

We note that the RR-CG algorithm is not limited to GP applications.
Future work should explore applying RR-CG to other optimization problems with large-scale solves.


\section*{Acknowledgements}
This work was supported by the Simons Foundation, McKnight Foundation, the Grossman Center, and the Gatsby
Charitable Trust.

\bibliography{citations}
\bibliographystyle{icml2021}

\clearpage
\appendix

\makeatletter
	\setcounter{table}{0}
	\renewcommand{\thetable}{S\arabic{table}}%
	\setcounter{figure}{0}
	\renewcommand{\thefigure}{S\arabic{figure}}%
    \setcounter{equation}{0}
    \renewcommand\theequation{S\arabic{equation}}
\makeatother

\section{Proof of Biases for CG and RFF}
\subsection{Proof of Theorem~\ref{CG_bias_theorem}}
\begin{proof}
To prove the bias of the CG log marginal likelihood terms, we rely on a connection between conjugate gradients, the Lanczos algorithm \cite{lanczos1950iteration}, and Gauss quadrature.
In particular, we demonstrate that $u_J$ and $v_J$ -- CG's estimates for $\invquad$ and $\logdet$ terms -- are equivalent to Gauss quadrature approximations of two Riemann-Stieljes integrals.
We then use quadrature error analysis to prove the biases of these terms.

\subparagraph{Expressing $\invquad$ and $\bz^\top (\log \trainK) \bz$ as Riemann-Stieljes integrals.}
First, we note that $\invquad$ and $\bz^\top (\log \trainK) \bz$ (our stochastic trace estimate of $\logdet$) can both be expressed by the quadratic form $\bw^\top f(\bA) \bw$, where $f (\bA)$ denotes a matrix function of the matrix $\bA$, and $\bw$ is a vector.
Letting $\bA = \bP \bLambda \bP^\top$ be the eigendecomposition of $\bA$, we can write this quadratic form as
\begin{align*}
    \bw^\top f(\bA) \bw &= \bw^\top \bP f(\bLambda) \bP^\top \bw = \Vert \bw \Vert^2 \sum_{i=1}^N f(\lambda_i) \mu_i^2, 
\end{align*}
where $\lambda_{\min} \leq \lambda_i \leq \lambda_{\max}$ are the diagonal elements of $\bLambda$ (i.e. the eigenvalues) -- ordered from smallest to largest, and $\mu_i$ are the components of $\bP^\top \bw / \| \bw \|$. The summation in the above equation can be expressed as a Riemann-Stieltjes integral:
\begin{align}
    I[f] & \defeq  \Vert \bw \Vert^2 \sum_{i=1}^N f(\lambda_i) \mu_i^2
    \nonumber \\
    &=  \Vert \bw \Vert^2 \int_{\lambda_{\min}}^{\lambda_{\max}} f(t) d\mu_{\bA}(t),
    \label{eqn:rs_integral}
\end{align} 
where the measure $\mu_{\bA}(t)$ is a piecewise constant function 
\begin{align}
    \mu_{\bA}(t) &= \begin{cases}
    0, \qquad &\textrm{if } t < \lambda_{\min}, \\
    \sum_{j=1}^i \mu_j^2, \qquad & \textrm{if } \lambda_i \leq t < \lambda_{i-1} \\
    \sum_{j=1}^N \mu_j^2, \qquad & \textrm{if } \lambda_{\max} \leq t. 
    \end{cases}
    \label{eqn:rs_measure}
\end{align}
See \citep{golub2009matrices} for more details.

\subparagraph{The Lanczos algorithm approximates these integrals with Gauss quadrature.}
The Lanczos algorithm \cite{lanczos1950iteration}, which is briefly described in \cref{subsec:CG_bias}, iteratively expresses a symmetric matrix $\bA$ via the partial tridiagonalization $\bT^{(J)}_\bw = \bQ^{(J)\top}_\bw \bA \bQ^{(J)}_\bw$.
$\bQ^{(J)}_\bw \in \reals^{N \times J}$ is a orthonormal matrix with $\bw / \Vert \bw \Vert$ as its first column, and $\bT^{(J)}_\bw$ is a $J \times J$ tridiagonal matrix.
Briefly, the columns of $\bQ^{(J)}$ matrices are computed by performing Gram-Schmidt orthogonalization on the Krylov subspace $[ \bw, \bA \bw, \bA^2 \bw, \ldots, \bA^{J-1} \bw ]$, and storing the orthogonalization constants in $\bT^{(J)}_\bw$.

The Lanczos algorithm is commonly used to estimate quadratic forms:
\begin{align}
    \bw^\top f(\bA) \bw
    &\approx
    \bw^\top \bQ^{(J)})_\bw f \left( \bT^{(J)}_\bw \right) \bQ^{(J)\top})_\bw \bw
    \nonumber
    \\
    &=
    \Vert \bw \Vert^2 \be^\top_1 f \left( \bT^{(J)}_\bw \right) \be_1,
    \label{eqn:lanczos_quadrature}
\end{align}
where $\be_1$ is the first unit vector.
Note that \cref{eqn:lanczos_quadrature} holds because the columns of $\bQ^{(J)}_\bw$ are orthonormal.

There is a well-established connection between \cref{eqn:lanczos_quadrature} and numeric quadrature \citep[e.g.][]{golub2009matrices}.
More specifically, \cref{eqn:lanczos_quadrature} is exactly equivalent to a $J$-term Gauss quadrature rule applied to the Riemann-Stieltjes integral in \cref{eqn:rs_integral}.
We can thus analyze Lanczos estimates of $\bw^\top f(\bA) \bw$ using standard Gauss quadrature error analysis.


\subparagraph{Equivalence between CG and the Lanczos algorithm.}
We will now show an equivalence between our estimates $u_J \approx \invquad$ and $v_J \approx \bz^\top (\log \trainK) \bz$ and Lanczos algorithm approximations.
Note that we have already established $v_J \approx \bz^\top (\log \trainK) \bz$ as a Lanczos algorithm approximation in \cref{eqn:logdet_estimate}:
$$
\bz^\top = \Vert \bz \Vert^2 \be_1^\top (\log \bT^{(J)}_\bz) \be_1.
$$
For $u_J \approx \invquad$, we exploit a connection between CG and Lanczos \citep[][Ch. 11.3]{golub2012matrix}:
\begin{equation}
    \sum_{j=1}^J \gamma_j \bd_j = \Vert \by \Vert \bQ_\by^{(J)} (\bT_{\by}^{(J)})^{-1}  \be_1,
    \label{eqn:cg-lanczos}
\end{equation}
where $\sum_{j=1}^J \gamma_j \bd_j$ (see Eq.~\ref{eqn:cg_series}) is the $J^\text{th}$ CG approximation to $\trainK^{-1} \by$.
Multiplying \cref{eqn:cg-lanczos} by $\by^\top$, we have 
\begin{align*}
    u_J  &= \by^\top \left( \sum_{j=1}^J \gamma_j \bd_j \right)
    \\
    &= \| \by\| \by^\top \bQ_\by^{(J)} (\bT_{\by}^{(J)})^{-1} \be_1
    \\
    &= \| \by \|^2 \be_1^\top (\bT_\by^{(J)})^{-1} \be. 
\end{align*}
Therefore, the CG approximations of $\invquad$ and $\bz^\top (\log \trainK) \bz$ are both Lanczos approximations.
Putting all the pieces together, we have just shown that $u_J$ and $v_J$ are equivalent to $J$-term Gauss quadrature rules applied to the following integrals:
\begin{align}
    \by^\top \trainK^{-1} \by &= \Vert \by \Vert^2 \int_{\lambda_{\min}}^{\lambda_{\max}} t^{-1} \intd\mu_{\trainK}(t),
    \label{eqn:rs_integral_invquad}
    \\
    \bz^\top \left( \log  \trainK^{-1} \right) \bz &= \Vert \bz \Vert^2 \int_{\lambda_{\min}}^{\lambda_{\max}} \log (t) \intd\mu_{\trainK}(t),
    \label{eqn:rs_integral_logdet}
\end{align}
where the measure $\mu_{\trainK}(t)$ is defined by \cref{eqn:rs_measure}.

\paragraph{Applying Gauss quadrature error analysis to $u_J$ and $v_J$.}
To analyze the bias of teh CG estimates, we make use of standard error Guass quadrature error analysis.
If $L_G^{(J)}[f]$ is the $J$-term Gaussian quadrature approximation of the Riemann-Stieltjes integral $I[f]$ (see Eq.~\ref{eqn:rs_integral}), the error can be exactly expressed as:
\begin{align}
    I[f] - L_G^{(J)} [f] &= (\gamma_1 \cdots \gamma_{J-1})^2 \frac{f^{(2J)} (\eta)}{(2J) \!},  
\end{align}
for some $\eta \in [\lambda_{\min}, \lambda_{\max}]$, where $f^{(2J)}$ is the $2J^\text{th}$ derivative of $f$ and $\{\gamma_i\}$ are some quantities that depend on the spectrum of the matrix $\trainK$ \citep[][Ch. 6]{golub2009matrices}.

Turning back to $u_J$, the corresponding function $f(t) = t^{-1}$ has even derivatives of the form $f^{(2J)} = (2J)! \: t^{-(2J+1)}  >0, \forall J, \forall t \in (\lambda_{\min}, \lambda_{\max})$.
Replacing $I[f]$ with the integral defined by \cref{eqn:rs_integral_invquad}, we have that $\invquad - u_J \geq 0$, which proves that CG underestimates $\invquad$.
Similarly for $v_J$, the corresponding $f(t) = \log t$ has even derivatives of the form $f^{(2J)} = -(2J-1)! \: t^{-2J} < 0, \forall J, \forall t \in (\lambda_{\min}, \lambda_{\max})$.
Replacing $I[f]$ with the integral defined by \cref{eqn:rs_integral_logdet}, we have that $\bz^\top ( \log \trainK ) \bz - v_J \leq 0$.

\subparagraph{The convergence rates} of $u_J$ and $v_J$ follow from Eq.~4.4 of \citet{ubaru2017fast}.
Let $\rho = (\sqrt \kappa + 1)/(\sqrt \kappa - 1)$, where $\kappa$ is the condition number of $\trainK$ (i.e. the ratio of its maximum and minimum singular values). Then
\begin{align*}
  \Vert \by^\top \trainK^{-1} \by - u_J \Vert &\leq C\rho^{-2J}, \\
  \Vert \bz^\top (\log \trainK) \bz - v_J \Vert &\leq C\rho^{-2J},
\end{align*}
where $C$ is a constant that depends on the extremal eigenvalues of $\trainK$.
\end{proof}

\subsection{Proof of Theorem~\ref{RFF_bias_theorem}}

\begin{proof}
The inequalities are a result of applying Jensen's
inequality to the inverse of a positive definite matrix (which is a convex function) and
the log determinant of a positive definite matrix (which is a concave function).
  For example, for the $\invquad$ term we have that 
  \begin{equation*}
      \begin{split}
        \Evover{\mathbb{P}(\bomega)}{\by^{\top} \widetilde{\bK}_{J}^{-1} \by}
        &=
        \by^{\top} \Evover{\mathbb{P}(\bomega)}{\widetilde{\bK}_{J}^{-1}} \by
        \\
        &\geq
        \by^{\top} \left(\Evover{\mathbb{P}(\bomega)}{\widetilde{\bK}_{J}}\right)^{-1} \by
        \\
        &=
        \invquad
        \\
      \end{split}
  \end{equation*}
  A similar procedure, but in the opposite direction applies to the $\logdet$. 
  
  To estimate the rate of decay of the biases, we rely on two keys ideas. The first idea is to
translate the central moments of the kernel matrix being approximated by the random
Fourier features into the central moments of these features. This strategy 
resembles the analysis in \cite{nowozin2018debiasing} which uses
\citep[][Theorem~1]{angelova2012moments}. The second idea is to use an approximation to two matrix
functions. For the inverse function we use a Neumann series and for the logarithm we use
a Taylor series. These series require that the eigenvalues of the approximation residual ($\trainK - \widetilde K_J)$ are less than 1.
We will make this assumption as we know that for a large enough $J$, the kernel approximation will be close to $\trainK$. 
Below is a formal argument.
  
  For a fixed $\bomega$ we can write
  \begin{equation}\label{eq:trick}
      \begin{split}
          &\by^{\top} \widetilde{\bK}_{J}^{-1} \by
          =
          \by^{\top} \left(\trainK - \trainK + \widetilde{\bK}_{J}\right)^{-1} \by
          \\
          &=
          \by^{\top} \trainK^{-1/2}
          \left(\bI- \bI + \trainK^{-1/2}\widetilde{\bK}_{J}\trainK^{-1/2}\right)^{-1}
          \trainK^{-1/2} \by
          \\
          &=
          \by^{\top} \trainK^{-1/2}
          \left(\bI- \left(\bI - \trainK^{-1/2}\widetilde{\bK}_{J}\trainK^{-1/2}\right)\right)^{-1}
          \trainK^{-1/2} \by
      \end{split}
  \end{equation}
  this last form allow us to use a Neumann series to expand the inner inverse matrix.
  Hence, we have that
  \begin{equation*}
      \begin{split}
         &\left(\bI- \left(\bI - \trainK^{-1/2}\widetilde{\bK}_{J}\trainK^{-1/2}\right)\right)^{-1}
         \\
         &=
        \sum_{t=0}^{\infty}
        \left(\bI - \trainK^{-1/2}\widetilde{\bK}_{J}\trainK^{-1/2}\right)^{t}
      \end{split}
  \end{equation*}
  Combining this with Eq. (\ref{eq:trick}), we have that
  \begin{equation} \label{eq:6}
      \begin{split}
          &\Evover{\mathbb{P}(\bomega)}{\by^{\top} \widetilde{\bK}_{J}^{-1} \by}
          =
          \by^{\top} \trainK^{-1} \by
          \\
          &+
          \sum_{t=2}^{\infty}
          \by^{\top} \trainK^{-1/2}
          \Evover{\mathbb{P}(\bomega)}
          {\left(\bI - \trainK^{-1/2}\widetilde{\bK}_{J}\trainK^{-1/2}\right)^{t}}
          \trainK^{-1/2} \by
      \end{split}
  \end{equation}
  where the first term of the series ($t=0$) is the data-fit term being approximated and
  the second term of the series cancels out ($t=1$). Following the analysis in
  \cite{nowozin2018debiasing} we translate the central moments of the random variable
  $\trainK^{-1/2}\widetilde{\bK}_{J}\trainK^{-1/2}$ to the central moments of
  $\trainK^{-1/2} \left(\bphi\left(\bomega\right) \bphi\left(\bomega\right)^{\top}
  + \sigma^{2} \bI\right)
  \trainK^{-1/2}$ denoted as $\mu_{i}$ for $i \geq 2$. Since the term $(t=2)$ will
  dominate, we will only focus on it (as the others would be of a higher order in $J$,
  see \cite{angelova2012moments}).
  We have that
  \begin{equation}\label{eq:expansion}
      \begin{split}
          \Evover{\mathbb{P}(\bomega)}
          {\left(\bI - \trainK^{-1/2}\widetilde{\bK}_{J}\trainK^{-1/2}\right)^{2}}
          &=
          \frac{\mu_{2}}
          {J}
      \end{split}
  \end{equation}
  Therefore, incorporating the previous result into Eq. (\ref{eq:6}), allows us to
  conclude that
  \begin{equation*}
      \begin{split}
          \Evover{\mathbb{P}(\bomega)}{\by^{\top} \widetilde{\bK}_{J}^{-1} \by}
          -
          \by^{\top} \trainK^{-1} \by
          =
          \mathcal{O}\left(1/J\right)
          \\
      \end{split}
  \end{equation*}

  For the model complexity term $\logdet$ we follow a similar procedure as before.
  For a fixed $\bomega$
  \begin{equation*}
      \begin{split}
          &\log\left|\widetilde{\bK}_{J}\right|
          =
          \log\left|\trainK - \trainK + \widetilde{\bK}_{J}\right|
          \\
          &=
          \log\left|\trainK \right|
          -
          \log\left|\bI -
          \left(\bI - \trainK^{-1/2} \widetilde{\bK}_{J} \trainK^{-1/2}\right)
          \right|
      \end{split}
  \end{equation*}
  Focusing on the last term we have that
  \begin{equation*}
      \begin{split}
          &\log\left|\bI -
          \left(\bI - \trainK^{-1/2} \widetilde{\bK}_{J} \trainK^{-1/2}\right)
          \right|
          \\
          &=
          \text{tr} \left(
          \log \left(\bI -
          \left(\bI - \trainK^{-1/2} \widetilde{\bK}_{J} \trainK^{-1/2}\right)
          \right) \right)
      \end{split}
  \end{equation*}
  We can rewrite the term in the R.H.S as follows
  \begin{equation*}
      \begin{split}
          &\log
          \left(\bI -
          \left(\bI - \trainK^{-1/2} \widetilde{\bK}_{J} \trainK^{-1/2}\right)
          \right)
          \\
          &=
          \sum_{t=1}^{\infty}
          \frac{1}{t}
          \left(\bI - \trainK^{-1/2} \widetilde{\bK}_{J}
          \trainK^{-1/2}\right)^{t}
      \end{split}
  \end{equation*}
  Again, following the analysis of \cite{nowozin2018debiasing}, we can express the
  central moments of the random variable
  $\trainK^{-1/2}\widetilde{\bK}_{J}\trainK^{-1/2}$ to the central moments of
  $\trainK^{-1/2}
  \left(\bphi\left(\bomega\right) \bphi\left(\bomega\right)^{\top}
  + \sigma^{2} \bI \right)
  \trainK^{-1/2}$
  denoted as $\mu_{i}$ for $i \geq 2$. Also, as explained before, the dominant term will
  be ($t=2$), therefore we have that thus, using Eq. (\ref{eq:expansion})
  we have that
  \begin{equation*}
      \begin{split}
          &\text{tr}\left(
          \Evover{\mathbb{P}(\bomega)}{
          \log
          \left(\bI -
          \left(\bI - \trainK^{-1/2} \widetilde{\bK}_{J} \trainK^{-1/2}\right)
          \right)} \right)
          \\
          &=
          \mathcal{O}\left(1/J\right)
      \end{split}
  \end{equation*}
  Therefore, we can conclude that
  \begin{equation*}
      \begin{split}
          \Evover{\mathbb{P}(\bomega)}{
          \log\left|\widetilde{\bK}_{J}\right|}
          -
          \log\left|\trainK \right|
          =
          \mathcal{O}\left(1/J\right).
      \end{split}
  \end{equation*}
\end{proof}

\section{Further Derivations of Randomized Truncation Estimators}
\label{RTE-appendix}
Here we prove the unbiasedness of both Russian Roulette (RR) \cref{eqn:rr} and Single Sample (SS) \cref{eqn:ss_estimator} estimators.
Recall from \cref{subsec:randomized_truncation_intro} that we wish to estimate the expensive
$$\textstyle \psi = \sum_{j=1}^H \Delta_j, \quad H \in \mathbb{N} \cup \{\infty\},$$
by computing just the first $J \ll H$ terms, where $J\in \{1,\dots,H\}$ is randomly drawn from the truncation distribution $\mathbb{P}(\mathcal{J}=J)$. 
RR and SS estimators, denoted as $\bar{\psi}$, offer two strategies for up-weighting the $J$ computed terms, which, as we will prove below, yield an unbiased estimator $\mathbb{E}_J [\bar{\psi}] = \psi$.
\subsection{Unbiasedness of the RR Estimator}
The RR estimator of \cref{eqn:rr} is unbiased, i.e., $\mathbb{E}_J  \left[ \bar{\psi}_J \right] = \psi$.
\begin{proof}
\begin{align*}
\mathbb{E}_J \left[ \bar{\psi}_J \right] &= \mathbb{E}_{J} \left[ \sum_{j=1}^J  \frac{\Delta_j}{\mathbb{P} (\mathcal{J} \geq j )}\right] \\
&= \mathbb{E}_{J} \left[\sum_{j=1}^H \frac{\Delta_j}{\mathbb{P} (\mathcal{J} \geq j)} \cdot \mathbb{I}_{j \leq J} \right] \\
&= \sum_{j=1}^H \frac{\Delta_j}{\mathbb{P} (\mathcal{J} \geq j)} \mathbb{E}_{J}[\mathbb{I}_{j \leq J}] \\
&= \sum_{j=1}^H \frac{\Delta_j}{\mathbb{P} (\mathcal{J} \geq j)} \left[\sum_{J=1}^H \mathbb{P} (\mathcal{J} = J) \cdot \mathbb{I}_{J \geq j} \right] \\
&= \sum_{j=1}^H \frac{\Delta_j}{ \mathbb{P} (\mathcal{J} \geq j )}  \mathbb{P} (\mathcal{J} \geq j )
=  \psi
\end{align*}
\end{proof}


\subsection{Unbiasedness of the SS Estimator}
\label{appendix:SS_unbiased}
The SS estimator of \cref{eqn:ss_estimator} is unbiased, i.e., $\mathbb{E}_J \left[ \bar{\psi}_J \right] = \psi$.
\begin{proof}
\begin{align*}
\mathbb{E}_J \left[ \bar{\psi}_J \right] &= \mathbb{E}_J \left[ \sum_{j=1}^H  \frac{\Delta_j}{\mathbb{P} (\mathcal{J} = j )} 
\cdot \mathbb{I}_{j=J} \right] \\
&= \sum_{j=1}^H  \frac{\Delta_j}{\mathbb{P} (\mathcal{J} = j )} \mathbb{E}_J[\mathbb{I}_{j=J}]\\
&= \sum_{j=1}^H  \frac{\Delta_j}{\mathbb{P} (\mathcal{J} = j )} \sum_{J=1}^H \mathbb{P}(\mathcal{J}=J)\cdot \mathbb{I}_{J=j}\\
&= \sum_{j=1}^H  \frac{\Delta_j}{\mathbb{P} (\mathcal{J} = j )} \mathbb{P} (\mathcal{J} = j )
=  \psi
\end{align*}
\end{proof}

\subsection{Minimizing the Variance of the SS Estimator}
Below we will derive the optimal distribution that minimizes the variance of our SS
  estimator. Note that for a given truncation distribution, we have that
  \begin{equation*}
      \begin{split}
        \mathbb{V}_J \left(\bar{\psi}\right)
        &=
        \sum_{j=1}^{H} \frac{\Delta_{j}^{2}}{\mathbb{P}\left(\mathcal{J}=j\right)^{2}}
        \mathbb{V}_J \left(\mathbb{I}_{\mathcal{J}=j}\right)
        \\
        &=
        \sum_{j=1}^{H} \Delta_{j}^{2}
        \left(\frac{1 - \mathbb{P}\left(\mathcal{J}=j\right)}
        {\mathbb{P}\left(\mathcal{J}=j\right)}\right)
      \end{split}
  \end{equation*}
  since $\mathbb{I}_{\mathcal{J}=j}$ is a Bernoulli random variable with probability
  $\mathbb{P}\left(\mathcal{J}=j\right)$, we can plug-in its variance and derive the
  second equality. Hence,
  to find the truncation distribution that minimizes the variance of the SS estimator we
  can solve the following constraint optimization problem.
  \begin{equation*}
      \begin{split}
        \min_{p}
          \sum_{j=1}^{H} \Delta_{j}^{2} \frac{1 - p_{j}}{p_{j}}
          \quad
          \text{s.t.} \quad \sum_{j=1}^{H} p_{j} = 1 , \quad p_{j} \geq 0
      \end{split}
  \end{equation*}
  where $p_{j}$ is acting as a shorthand of $\mathbb{P}\left(\mathcal{J}=j\right)$.  The
  Lagrangian of the problem is
  \begin{equation*}
      \begin{split}
        \mathcal{L}\left(p, \lambda\right)
        =
          \sum_{j=1}^{H}
          \Delta_{j}^{2}
          \frac{1 - p_{j}}{p_{j}}
          + \lambda \left(1 - \sum_{j=1}^{H} p_{j}\right)
      \end{split}
  \end{equation*}
  where we can ignore the nonnegativity constraints in $p_{j}$ as long as $\Delta_{j} >
  0$ (see solution below). Hence, the first order conditions dictate that
  \begin{equation*}
      \begin{split}
        \frac{\partial \mathcal{L}}{\partial p_{j}}\left(p_{j}^{\star}, \lambda^{\star}\right)
        =
        -\left(\frac{\Delta_{j}^{2}}{p_{j}^{\star}}\right)^{2}
        + \lambda^{\star}
        = 0
      \end{split}
  \end{equation*}
  therefore, if we take the ratio of the probabilities with respect to the first we get
  that $p_{j}^{\star} = \frac{\Delta_{j}}{\Delta_{1}} p_{1}^{\star}$. If we substitute
  this expression in the equality constraint we get that
  \begin{equation} \label{eq:optimalss}
      \begin{split}
        \mathbb{P}^{\star}\left(\mathcal{J}=j\right)
        =
        \frac{\Delta_{j}}{\sum_{i=1}^{H} \Delta_{i}}
        \propto
        \Delta_{j}
      \end{split}
  \end{equation}
  We emphasize that this result is a guide for practical choices of the truncation
  distribution. It is impractical to compute it as it will require evaluating all the
  $\Delta_{j}$ for $j=1,\dots,H$. However, if we posses an estimate or a theoretical
  bound on rate of decay of each $\Delta_{j}$ then our unnormalized truncation
  distribution should also decay at this rate to minimize variance.

\subsection{SS estimator as Importance Sampling}
Here we derive the SS estimator by importance sampling the quantity $\psi$ with $\mathbb{P}(\mathcal{J}=J)$ as our proposal distribution.
\begin{align*}
\psi &= \sum_{J=1}^H \Delta_J
\intertext{Next, re-write the summation above as an expectation over the discrete uniform distribution $J \sim \mathcal{U}[1,H] = \frac{1}{H}, \; \forall{J}$:}
 &= H\sum_{J=1}^H \frac{1}{H}\Delta_J, \\
 \intertext{We now introduce an alternative proposal distribution $J \sim \mathbb{P}(\mathcal{J}=J)$:}
&= H\sum_{J=1}^H \frac{\Delta_J}{H \mathbb{P}(\mathcal{J}=J)}  \mathbb{P}(\mathcal{J}=J) \\ 
&= \Evover{J \sim \mathbb{P}(\mathcal{J}=J)}{\frac{\Delta_J}{\mathbb{P}(\mathcal{J}=J)}}, 
\end{align*}
Approximating the final expectation using a single Monte Carlo sample results in the SS estimator.

\section{Estimating the Marginal Log Likelihood from RR-CG}
Recall from \cref{sec:debias_cg} that we use the Russian Roulette estimator in conjunction with conjugate gradients to compute an unbiased (stochastic) gradient of the log marginal likelihood.
In this section, we briefly describe how to obtain an unbiased estimate of the log marginal likelihood itself.

The $\invquad$ term can be estimated directly from the $\trainK^{-1} \by$ solve in \cref{eqn:rrcg-linear-solve}.
The $\logdet$ term is less straightforward, as the estimate from the CG byproducts (Eq.~\ref{eqn:logdet_estimate}) isn't readily expressed as a summation.
Instead, we apply the Russian Roulette estimator to the following telescoping series:
\begin{align}
\logdet &\approx \Vert \bz \Vert^2 \be_1^T \left(\log \bT_\bz^{(N)} \right) \be_1
\label{eqn:cg_logdet_telescope}
\\
&=\Vert \bz \Vert^2 \be_1^T \left(\log \bT_\bz^{(1)} \right) \be_1 + \sum_{j=2}^N \Delta_j,
\nonumber
\end{align}
where
$\Delta_j = \Vert \bz \Vert^2 \be_1^T \left(\log \bT_\bz^{(j)} - \log\bT_\bz^{(j-1)} \right) \be_1$.
We can therefore apply the Russian Roulette estimator with truncation $\mathbb{P}(J)$ to \cref{eqn:cg_logdet_telescope}.
This telescoping series can be expensive to compute.
Since it is unnecessary for gradient-based optimization, we introduce it only as a tool to analyze the RR-CG log marginal likelihood.

\section{Optimal Truncation Distributions}
\subsection{Proof of Theorem~\ref{th:optimalrrcg}} 

\begin{proof}
  For \cref{th:optimalrrcg} we have to combine the results of \cref{CG_bias_theorem} with the optimal distribution
  for RR that is derived in \citet{beatson2019efficient}. We will only focus on
  $v_{J}$ which that approximates $\logdet$ since the procedure is analogous for
  $u_{J}$ which approximates $\invquad$.
  \citet[][Theorem~5.4]{beatson2019efficient} state that the optimal RR truncation
  distribution that maximizes the ROE is
  \begin{equation*}
      \begin{split}
        \mathbb{P}^{*}\left(J \geq j\right)
        \propto
        \sqrt{
          \frac{\mathbb{E}\left[||v_{j}||^{2}\right]}{c\left(j\right) -
          c\left(j-1\right)}
      }
      \end{split}
  \end{equation*}
  where $c\left(j\right)$ is the computational cost of evaluating
  $v_{j}$ which is the $j$-th term being approximated through Russian Roulette.
  In this case, the $v_{j}$ are nonrandom and single-valued and
  the cost per iteration is constant with respect to $j$. Hence, we can conclude that
  \begin{equation*}
      \begin{split}
        \mathbb{P}^{*}\left(J \geq j\right)
        &\propto
        v_{j}
        =
        \mathcal{O}\left(C^{-2j}\right)
      \end{split}
  \end{equation*}
  which implies that $\mathbb{P}^{*}\left(J \leq j\right) \propto
  1 - \mathcal{O}\left(C^{-2j}\right)$
  since the derivative of an exponential function is of the same form we have that
  $\mathbb{P}^{*}\left(J=j\right) \propto \mathcal{O}\left(C^{-2j}\right)$.
\end{proof}

\subsection{Proof of Theorem~\ref{th:optimalssrff}}

The strategy for \cref{th:optimalssrff} is to relate the kernel approximation using $J+1$ random
Fourier features to the kernel approximation using $J$ features for the two components of
the log marginal likelihood: the data-fit term $\invquad$ and the model complexity term
$\logdet$. For $\invquad$ we create this connection through the Sherman-Morisson formula
and for the $\logdet$ we use the matrix determinant lemma. Throughout these proofs, we
will require some results regarding positive definite matrices that we add as remarks
below. Before stating those remarks, we will first introduce some auxiliary notation.
\begin{equation*}
    \begin{split}
        \bK_{J}
        &:=
        \frac{1}{J}\sum_{j=1}^{J}
        \bphi_{j} \bphi_{j}^{\top}
        \\
        \left(J + 1\right)\bK_{J+1}
          &=
          J \bK_{J}
          +
          \bphi_{J+1} \bphi^{\top}_{J+1}
    \end{split}
\end{equation*}
Note the difference between the $\widetilde{\bK}_{J}$ term used throughout the paper
which includes the $\sigma^{2} \bI$ against the $\bK_{J}$ term which only includes
the RFF features. Moreover, to reduce clutter we define the following matrix
\begin{equation*}
    \begin{split}
      \bW := \left(\frac{J}{J+1} \bK_{J} + \sigma^{2} \bI\right)
    \end{split}
\end{equation*}
whose use will become evident throughout the proof.
\begin{rem}
  Given a positive definite matrix $\bA$ we have that for any $a \in \left(0, 1\right)$ and
  any $b > 0$
  \begin{equation*}
      \begin{split}
        \norm{a \bA + b \bI}_{2}^{2}
          \leq
          \norm{\bA + b \bI}_{2}^{2}
      \end{split}
  \end{equation*}
\end{rem}
If we express $\bA = \bV \bD \bV^{\top}$, where $\bV$ is an orthogonal matrix and
$\bD$ is a diagonal matrix containing the positive eigenvalues of $\bA$, then
we have that
$a \bA + b \bI = \bV \left(a \bD + b \bI\right) \bV^{\top}$. This implies that the
eigenvalues of $\bA + b \bI$ are larger than those of $a\bA + b \bI$. An immediate
consequence of this remark is that $\norm{\bW}_{2}^{2} \leq
\norm{\bK_{J} + \sigma^{2} \bI}_{2}^{2}$ or that
$\norm{\bW^{-1}}_2^2 \leq \norm{\Lambda^{-1} + \sigma^{-2}}_2^{2}
\leq \sigma^{-4}$ where $\Lambda$ contains the positive eigenvalues of $\bK_{J}$.
\begin{rem}
  Given a positive definite matrix $\bA$ we have that for any $a \in \left(0,1\right)$,
  any $b > 0$ and any vector
  $\bx$
  \begin{equation*}
      \begin{split}
        \bx^{\top} \left(a\bA + b \bI\right)^{-1} \bx
        \leq
        \bx^{\top} \left(\bA + b \bI\right)^{-1} \bx
        \leq
        b^{-1} \bx^{\top} \bx
      \end{split}
  \end{equation*}
\end{rem}
This remark follows by using the same diagonal decomposition as the one used in the
previous remark. Hence, by noting that the eigenvalues of
$\left(b \bI\right)^{-1}$ are larger than those of $\left(a \bA + b \bI\right)^{-1}$
and this last matrix has larger eigenvalues than $\left(\bA + b \bI\right)^{-1}$ then
the result follows.

\begin{proof}[Proof of Theorem~\ref{th:optimalssrff}]
  We start by showing the rate of decay for the $\Delta_{j}$ involving the data-fit terms.
  Note that we can express
  \begin{equation}\label{eq:deltafit}
      \begin{split}
        &\by^{\top} \left(\bK_{J+1} + \sigma^{2} \bI\right)^{-1} \by
          \\
          &=
          \by^{\top} \left(
          \frac{J}{J+1} \bK_{J} +
          \sigma^{2} \bI +
          \frac{\bphi_{J+1} \bphi^{\top}_{J+1}}{J+1}\right)^{-1} \by
          \\
          &=
          \by^{\top} \left(
          \bW + \frac{\bphi_{J+1} \bphi^{\top}_{J+1}}{J+1}\right)^{-1} \by
      \end{split}
  \end{equation}
  Applying Sherman-Morisson formula to the inverse of the R.H.S results in
  \begin{equation*}
      \begin{split}
        \bW^{-1}
        -
        \frac{\left(\bphi_{J+1}^{\top} \bW^{-1}\right)^{\top}
        \left(\bphi_{J+1}^{\top} \bW^{-1}\right)}
        {\left(J+1\right) + \bphi_{J+1}^{\top} \bW^{-1}\bphi_{J+1}}
      \end{split}
  \end{equation*}
  where the symmetry of $\bW$ allows us to express the numerator as above.
  Substituting the previous result into Eq. (\ref{eq:deltafit}) and rearranging terms
  allows us to conclude that
  \begin{equation}\label{eq:invquad_delta}
      \begin{split}
        &\by^{\top} \left(\sigma^{2} \bI + \bK_{J}\right)^{-1} \by
        -
        \by^{\top} \left(\sigma^{2} \bI + \bK_{J+1}\right)^{-1} \by
        \\
        &\leq
        \by^{\top} \left(\sigma^{2} \bI + \frac{J}{J+1}\bK_{J}\right)^{-1} \by
        -
        \by^{\top} \left(\sigma^{2} \bI + \bK_{J+1}\right)^{-1} \by
        \\
        &=
        \frac{\left(\bphi_{J+1}^{\top} \bW^{-1} \by\right)^{2}}
        {\left(J+1\right) + \bphi_{J+1}^{\top} \bW^{-1}\bphi_{J+1}}
        \\
        &\leq
        \frac{\left(\bphi_{J+1}^{\top} \bW^{-1} \by\right)^{2}}
        {J+1}
        \\
        &\leq
        \frac{
          \norm{\bphi_{J+1}}_{2}^{2} \norm{\bW^{-1} \by}_{2}^{2}
        }
        {J+1}
        \\
        &\leq
        \frac{
          \norm{\bphi_{J+1}}_{2}^{2} \sigma^{-4}\norm{\by}_{2}^{2}
        }
        {J+1}
        \\
      \end{split}
  \end{equation}
  where the first inequality follows from Remark 2,
  the second inequality occurs since $\bphi_{J+1}^{\top} \bW^{-1} \bphi_{J+1} > 0$,
  the third inequality results from applying Cauchy-Schwarz and the fourth stems from Remark 1.
  Finally,
  taking expectations in Eq. (\ref{eq:invquad_delta}) we can conclude that
  \begin{equation*}
      \begin{split}
        &\Evover{\mathbb{P}(\bomega)}{
          \by^{\top} \left(\sigma^{2} \bI + \bK_{J}\right)^{-1} \by}
        \\
        &-
        \Evover{\mathbb{P}(\bomega)}{
          \by^{\top} \left(\sigma^{2} \bI + \bK_{J+1}\right)^{-1} \by}
        \\
        &=
        \mathcal{O}\left(1/J\right).
      \end{split}
  \end{equation*}

  We now move into the rate of decay of the $\Delta_{j}$ terms involving the model
  complexity terms. Note that we can express
  \begin{equation*}
      \begin{split}
        \left| \bK_{J+1} + \sigma^{2} \bI \right|
        &=
        \left|
        \frac{J}{J+1}\bK_{J} + \sigma^{2} \bI +
        \frac{\bphi_{J+1} \bphi_{J+1}^{\top}}{J+1}
        \right|
        \\
      \end{split}
  \end{equation*}
  then by using the matrix determinant lemma we have that
  \begin{equation}\label{eq:madet}
      \begin{split}
        &\left| \bK_{J+1} + \sigma^{2} \bI \right|
        =
        \left( 1 + \frac{1}{J+1} \bphi^{\top}_{J+1} \bW^{-1} \bphi_{J+1} \right)
        \left| \bW \right|
        \\
        &\leq
        \left( 1 + \frac{1}{J+1} \bphi^{\top}_{J+1} \bW^{-1} \bphi_{J+1} \right)
        \left| \bK_{J} + \sigma^{2} \bI \right|
        \\
        &\leq
        \left( 1 + \frac{\sigma^{-2}}{J+1} \bphi^{\top}_{J+1} \bphi_{J+1} \right)
        \left| \bK_{J} + \sigma^{2} \bI \right|
      \end{split}
  \end{equation}
  where the first inequality follows from Remark 1 and from noting that the determinant
  is equivalent to the product of the eigenvalues of the matrix. The second inequality
  follows from Remark 2.
  Now, we will take the logarithms and the expectations of Eq. (\ref{eq:madet}). We will
  first focus on the first term in the RHS. By using a Taylor expansion of the logarithm
  up to a second term we have that
  \begin{equation*} \label{eq:28}
      \begin{split}
        &\Evover{\mathbb{P}(\bomega)}{
        \log \left( 1 + \frac{\sigma^{-2}}{J+1} \bphi^{\top}_{J+1} \bphi_{J+1} \right)
      }
      \\
        &=
        \frac{\sigma^{-2}
        \Evover{\mathbb{P}(\bomega)}{\bphi^{\top}_{J+1} \bphi_{J+1}}}{J+1}
        +
        \mathcal{O}\left(1/J^{2}\right)
      \end{split}
  \end{equation*}
  therefore, substituting the previous result into Eq. (\ref{eq:madet})
  \begin{equation*}
      \begin{split}
        &\Evover{\mathbb{P}(\bomega)}{
          \log \left| \bK_{J+1} + \sigma^{2} \bI \right|}
          -
         \Evover{\mathbb{P}(\bomega)}{
           \log \left| \bK_{J} + \sigma^{2} \bI \right|}
          \\
          &=
          \frac{\sigma^{-2}
          \Evover{\mathbb{P}(\bomega)}{\bphi^{\top}_{J+1} \bphi_{J+1}}}{J+1}
          + \mathcal{O}\left(1/J^{2}\right)
          \\
          &=
          \mathcal{O}\left(1/J\right)
      \end{split}
  \end{equation*}
  which concludes the analysis of the rate of decay of the log determinant terms.  Thus,
  since each rate of decay if $\mathcal{O}\left(1 / J\right)$, then following Eq. (\ref{eq:optimalss})
  we have that the truncation distribution
  that minimizes the variance is
  \begin{equation*}
      \begin{split}
        \mathbb{P}^{*}\left(J\right)
        \propto
        \Delta_{J}
        =
        \mathcal{O}\left(1 / J\right).
      \end{split}
  \end{equation*}
\end{proof}

\begin{table*}[t]
\centering
\resizebox{0.85\linewidth}{!}{\begin{tabular}{ccccccccccccccc}
\toprule
       &  & \multicolumn{7}{c}{RMSE} \\
       &  &                    Cholesky &           POE &              RFF &                        SVGP &                        sgGP &                          CG &                       RR-CG \\
Dataset & $n$ &                             &                             &                             &                             &                             &                             \\
\midrule
PolTele & $9.6$K &             $.112 \pm .002$ & 
$.174 \pm .006 $ & $\mathbf{ .106 \pm .000 }$ &             $.150 \pm .000$ &             $.128 \pm .001$ &             $.119 \pm .002$ &             $.112 \pm .002$ \\
Elevators & $10.6$K &  $\mathbf{ .360 \pm .006 }$ & $.379 \pm .005$ &  $\mathbf{ .365 \pm .001 }$ &             $.376 \pm .006$ &  $\mathbf{ .362 \pm .006 }$ &  $\mathbf{ .360 \pm .006 }$ &  $\mathbf{ .360 \pm .006 }$ \\
Bike & $11.1$K &  $\mathbf{ .035 \pm .003 }$ &   $.071 \pm .002$      &     $.063 \pm .001$ &             $.045 \pm .002$ &            $1.044 \pm .334$ &  $\mathbf{ .040 \pm .005 }$ &  $\mathbf{ .035 \pm .002 }$ \\
\hline
Kin40K & $25.6$K &                         --- &  $.248 \pm .001 $&  $\mathbf{ .074 \pm .000 }$          &   $.152 \pm .001$ &             $.081 \pm .000$ &             $.093 \pm .000$ &             $.091 \pm .001$ \\
Protein & $25.6$K &                         --- &  $.715 \pm .007$ & $\mathbf{ .547 \pm .001 }$ &             $.664 \pm .008$ &             $.562 \pm .005$ &  $\mathbf{ .541 \pm .008 }$ &  $\mathbf{ .541 \pm .008 }$ \\
KEGG & $31.2$K &                         --- &       $.097 \pm .004$  &    $.101 \pm .001$ &  $\mathbf{ .088 \pm .002 }$ &  $\mathbf{ .089 \pm .002 }$ &             $.195 \pm .064$ &  $\mathbf{ .087 \pm .003 }$ \\
KEGGU & $40.7$K &                         --- &    $.125 \pm .001$     &    $.129 \pm .001$ &             $.122 \pm .001$ &             $.123 \pm .001$ &  $\mathbf{ .120 \pm .000 }$ &  $\mathbf{ .120 \pm .000 }$ \\
3DRoad & $278$K &                         --- &  $.675 \pm .001$        &   $.348 \pm .001$ &             $.439 \pm .002$ &             $.285 \pm .003$ &             $.202 \pm .003$ &  $\mathbf{ .114 \pm .013 }$ \\
\bottomrule
\end{tabular}
}
\resizebox{0.85\linewidth}{!}{\begin{tabular}{ccccccccccccccc}
\toprule
       &  & \multicolumn{7}{c}{NLL} \\
       &  &                    Cholesky &  POE &              RFF &                          SVGP &                         sgGP &                          CG &                       RR-CG \\
Dataset & $n$ &                             &                    &                               &                              &                             &                             \\
\midrule
PolTele & $9.6$K &            $-.464 \pm .006$ &  $-.252 \pm .010 $ &   $-.159 \pm .005$ &              $-.442 \pm .004$ &  $\mathbf{ -.480 \pm .004 }$ &            $-.354 \pm .003$ &            $-.458 \pm .006$ \\
Elevators & $10.6$K &  $\mathbf{ .425 \pm .013 }$ & $.469 \pm .016$ &  $.780 \pm .006$ &    $\mathbf{ .442 \pm .015 }$ &   $\mathbf{ .426 \pm .015 }$ &  $\mathbf{ .429 \pm .012 }$ &  $\mathbf{ .425 \pm .013 }$ \\
Bike & $11.1$K &            $-.984 \pm .018$ & $-.799 \pm .010$  &   $.099 \pm .030$ &  $\mathbf{ -1.514 \pm .024 }$ &          $85.715 \pm 49.088$ &            $-.971 \pm .008$ &            $-.982 \pm .018$ \\
\hline
Kin40K & $25.6$K &                         --- &  $.464 \pm .002 $ &  $1.407 \pm .004$ &    $\mathbf{ -.410 \pm .003 }$ &              $.427 \pm .001$ &             $.468 \pm .003$ &             $.449 \pm .013$ \\
Protein & $25.6$K &                         --- & $1.105 \pm .008$ &   $1.163 \pm .003$ &              $1.014 \pm .012$ &              $.951 \pm .004$ &  $\mathbf{ .934 \pm .006 }$ &  $\mathbf{ .934 \pm .006 }$ \\
KEGG & $31.2$K &                         --- & $-.874 \pm .011$  & $6.311 \pm 2.323$ &   $\mathbf{ -1.022 \pm .023 }$ &  $\mathbf{ -.981 \pm .033 }$ &             $.415 \pm .653$ &            $-.884 \pm .009$ \\
KEGGU & $40.7$K &                         --- &   $ -.636 \pm .002 $ & $4.000 \pm .303$ &   $\mathbf{ -.685 \pm .005 }$ &             $-.668 \pm .005$ &            $-.637 \pm .006$ &            $-.650 \pm .004$ \\
3DRoad & $278$K &                         --- &  $1.031 \pm .002$ & $1.317 \pm .004$ &    $\mathbf{ .601 \pm .004 }$ &              $.831 \pm .000$ &  $\mathbf{ .613 \pm .010 }$ &             $.776 \pm .030$ \\
\bottomrule
\end{tabular}
}
  \caption{Root-mean-square-error (RMSE) and negative log-likelihood (NLL) of exact GPs using CG, RRCG and other baselines
  on UCI regression datasets using a constant prior mean and a RBF kernel with independent lengthscale for each dimension.
  All trials were averaged over 3 trials with different splits.
  $N$ and $d$ are the size and dimensionality of the training dataset, respectively.}
	\label{tab:rbf-ard}
\end{table*}

\begin{table*}[t]
\centering
\resizebox{0.85\linewidth}{!}{\begin{tabular}{ccccccccccccccc}
\toprule
       &  & \multicolumn{7}{c}{Training time  (m)} \\
       &  &                    Cholesky &            POE       &          RFF &                        SVGP &                        sgGP &                          CG &                       RR-CG \\
Dataset & $n$ &                             &                 &            &                             &                             &                             &                             \\
\midrule
PolTele & $9.6$K &             $22.417 \pm .035$ &   $1.167 \pm .006$                  &   $.464 \pm .002 $ &             $3.862 \pm .018 $ &     $.629 \pm .006$          &             $12.793 \pm .111$ &             $14.968 \pm .258$ \\
Elevators & $10.6$K &         $30.617 \pm .016 $ &  $1.051 \pm .008 $   &  $.503 \pm .002 $ &             $4.236 \pm .023 $        &     $.696 \pm .001$          &  $14.443 \pm  .080 $ &  $ 16.519 \pm .152$ \\
Bike & $11.1$K &  $34.991 \pm .016 $ &    $1.364 \pm .012$       &   $.476 \pm .014 $ &             $4.261 \pm .023 $        &    $.691 \pm .002$         &  $ 11.634 \pm .131 $ &  $ 13.669 \pm .080 $ \\
\hline
Kin40K & $25.6$K &                         --- & $.930 \pm .006$ &  $.772 \pm .008 $ &             $9.597 \pm .043$        &      $1.559 \pm .010$             &             $11.345 \pm .073$ &             $ 12.823 \pm .114 $ \\
Protein & $25.6$K &                         --- & $.851 \pm .003$ & $.716 \pm .008 $ &             $11.115 \pm .033    $ &    $1.867 \pm .007$       &  $10.507 \pm .044 $ &  $12.142 \pm .052 $ \\
KEGG & $31.2$K &                         --- &   $1.135 \pm .003$    &      $.682 \pm .002$        &  $11.881 \pm .058$          &    $1.786 \pm .009$         &             $22.780 \pm .021$ &  $25.390 \pm .089 $ \\
KEGGU & $40.7$K &                         --- &   $1.140 \pm  .005 $    &      $.835 \pm .001$ &             $15.281 \pm .070$ &    $2.572 \pm .029$        &  $ 34.396 \pm .026$ &  $37.825 \pm .064 $ \\
3DRoad & $278$K &                         --- & $2.176 \pm .022 $       &     $6.089 \pm .031$ &             $104.164 \pm .294$  &   $22.615 \pm .147$     &   $145.396 \pm 1.373$        & $158.657 \pm .554$ \\
\bottomrule
\end{tabular}
}
  \caption{Total training time (in minutes) of exact GPs using CG, RRCG and other baselines
  on UCI regression datasets (see the number of optimization iterations for each method in experiment setup).
  All trials were averaged over 3 trials with different splits.}
	\label{tab:rbf-ard-time}
\end{table*}

\section{Experiment Details.}

\begin{figure*}[t]
\vskip 0.1in
\begin{center}
\includegraphics[scale=0.5]{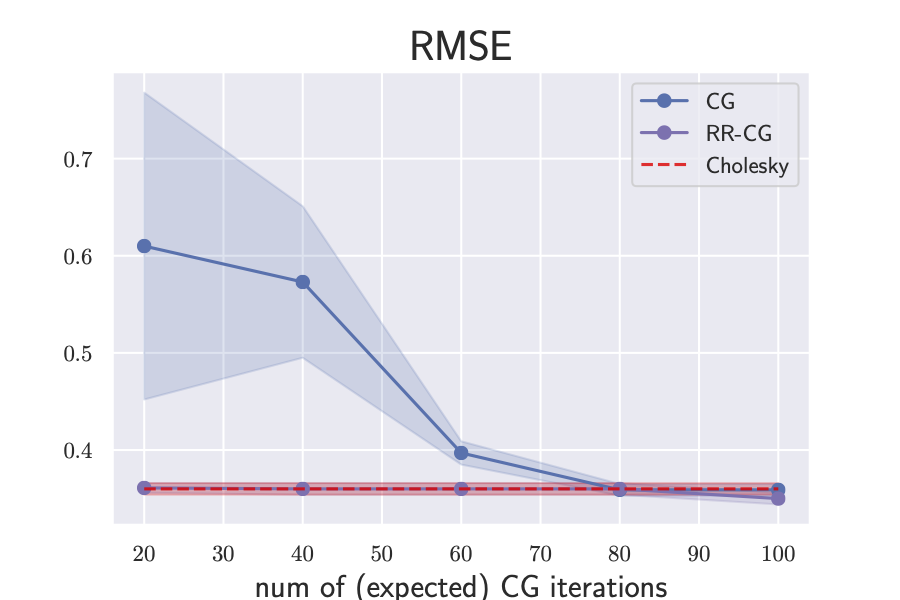}
\includegraphics[scale=0.5]{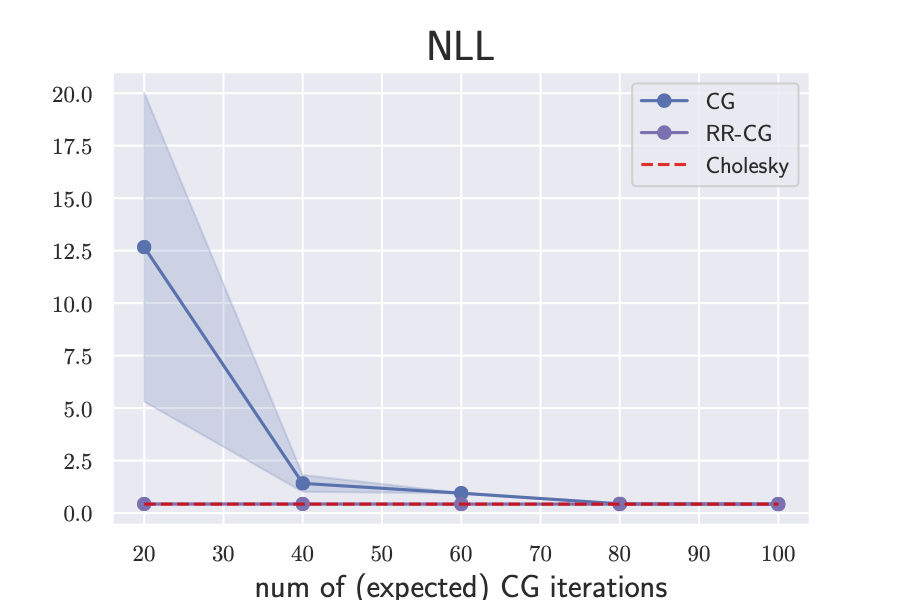}
 \caption{
  Predictive RMSE (left) and NLL (right) as a function of the number of (expected) CG iterations by optimizing Exact GP with CG (blue solid line) and RR-CG (purple solid line). Lower is better.
  The red dashed line corresponds to optimizing GP with Cholesky. The results are over three random seeds. }
\label{fig:debias_cg_rff_predictive}
\end{center}
\vskip -0.2in
\end{figure*}

We provide the experiment details for the predictive performance experiments in \cref{sec:results}. 

\paragraph{Experiment setup.} To optimize the hyperparameters of the CG, RR-CG and Cholesky models, we use an Adam optimizer with learning rate = 0.01 and a MultiStepLR scheduler dropping the learning rate by a factor of 10 at the 50\%, 70\% and 90\% of the optimization iterations. We run the optimization for 1500, 800 and 300 iterations on small (PoleTele, Elevators and Bike), medium (Kin40K, Protein, KEGG and KEGGU) and large (3DRoad) datasets, respectively. The number of iterations for CG and the expected number of iterations for RR-CG are both set to 100. The latter is achieved by using the truncation distribution from \cref{eqn:exp-decay-dist} with $\lambda = 0.05$ and $J_{\min}=80$. 

For CG and RR-CG, we use the rank-5 pivoted Cholesky preconditioner of \citet{gardner2018gpytorch}.
To reduce the number of optimization steps needed for the 3DRoad dataset, we initialize the hyperparameters to those found with the (biased) sgGP method.
During evaluation, we compute the predictive mean using $1{,}000$ iterations of CG.
Predictive variances are estimated using the rank-100 Lanczos approximation of \citet{pleiss2018constant}.

For RFF we use 1{,}000 random Fourier features across all experiments. In terms of optimization, we use an Adam optimizer with learning rate = 0.005 for KEGG, 0.001 for KEGGU and 0.01 for the remaining datasets. We also use a MultiStep scheduler that activates at 85\%, 90\%, 95\% of the optimization iterations with a decay rate of 0.5 and also take a total of 500 optimization iterations on all the datasets.

In all the SVGP models we use $1{,}024$ inducing points, with a full-rank multivariate Gaussian variational distribution.
Hyperparameters and variational parameters are jointly optimized for 300 epochs using minibatches of size $1{,}024$.
As with the other baselines, we use the an Adam optimizer with a learning rate of $0.01$, dropping the learning rate by a factor of $10$ after 50\%, 70\% and 90\% of the optimization iterations.

For sgGP, we train using minibatches of $16$ data points.
As suggested by \citet{chen2020stochastic}, the minibatches are constructed by sampling one training data point and selecting its $15$ nearest neighbors.
To accelerate optimization, we accumulate the gradients of $1{,}024$ minibatches before performing an optimization step (these $1{,}024$ minibatch updates can be performed in parallel, enabling GPU acceleration).
We optimize the models for $300$ epochs, using the same learning rate and scheduler as with SVGP.
During evaluation, we use the same procedure as for CC/RR-CG ($1{,}000$ iterations of CG, rank-$100$ Lanczos variance estimates).

For POE, we divide the training dataset into disjoint subsets, each with $M=1024$ data points.
If $N$ is not divisible by $1024$, we pad the final subset with elements from the first subset.
We train independent GP with independent hyperparameters on each of the subsets using the same optimization procedure as Cholesky models.
Following \citet[][Eqs.~11 and 12]{deisenroth2015distributed}, the posterior distribution for a test input $\bx^*$ is Gaussian with mean $\mu_\text{POE}^*(\bx^*)$ and variance $\sigma^{2*}_\text{POE}(\bx^*)$:
\begin{align*}
    \mu^*_\text{POE}(\bx^*) &= \frac{\sigma^{2*}_\text{POE}(\bx^*)}{K} \sum_{k=1}^K \sigma^{-2*}_k(\bx^*) \mu^*_k(\bx^*),
    \\
    \sigma^{-2*}_\text{POE}(\bx^*) &= \frac{1}{K} \sum_{k=1}^K \sigma^{-2*}_k(\bx^*),
\end{align*}
where $K$ is the number of independent GP experts and $\mu^*_k(\cdot)$ and $\sigma^{2*}_k(\cdot)$ are the posterior mean and variance of each expert model.

\paragraph{Full tables.}In Table \ref{tab:rbf-ard},  we report the RMSE and NLL numbers which are used to plot \cref{fig:nll_rmse} in the paper, and in Table \ref{tab:rbf-ard-time} we report the corresponding training time. 

\section{Additional Experiments}

\subsection{Predictive Performance with Different CG iterations}
 
 Here we include an additional experiment to show that early truncated CG can be detrimental to GP learning while RR-CG remains robust to the expected truncation number. 

We conduct GP learning with the CG, RR-CG and Cholesky methods in the Elevators dataset. We vary the number of (expected) iterations for CG and RR-CG from 20 to 100 and plot the corresponding RMSE / NLL in \cref{fig:debias_cg_rff_predictive}. From the figure, we conclude that: 1) early truncation of the CG algorithm impedes GP optimization and leads to poor predictions. This problem lessens as we increase the number of CG iterations from 20 to 100. 2) The RR-CG model is robust to the expected truncation number, as it keeps comparable RMSE and NLL values to the Cholesky model under different expected truncation numbers. This experiment can only be run in smaller datasets to be able to compare against Cholesky. We chose Elevators as this dataset requires the largest number of CG iterations before convergence hence
making the difference of choosing between CG and RR-CG evident. We emphasize that RR-CG is a better alternative to early-truncated CG as the latter can perform poorly in some cases, whereas the former is robust to the choice of expected truncation number.

\subsection{Convergence of GP hypeparameters for other datasets.}

In this section, we show how SS-RFF and RR-CG converge to the Cholesky solution whereas the biased methods do not. We make this analysis for other datasets than the ones used in the main manuscript. 

Using RFF with $1{,}000$ or $1{,}500$ features generates results that clearly diverge from the Cholesky solution and RFF with a $1{,}000$ features generates numerically unstable training.
In contrast, SS-RFF with $1{,}500$ features is able to achieve the Cholesky solution after $1{,}000$ iterations. Nonetheless, SS-RFF with $1{,}000$ features also suffers from numerical instability as RFF $1{,}000$ but at a lesser degree.
 \begin{figure}[h]
\vskip 0.1in
\begin{center}
\includegraphics[scale=0.4]{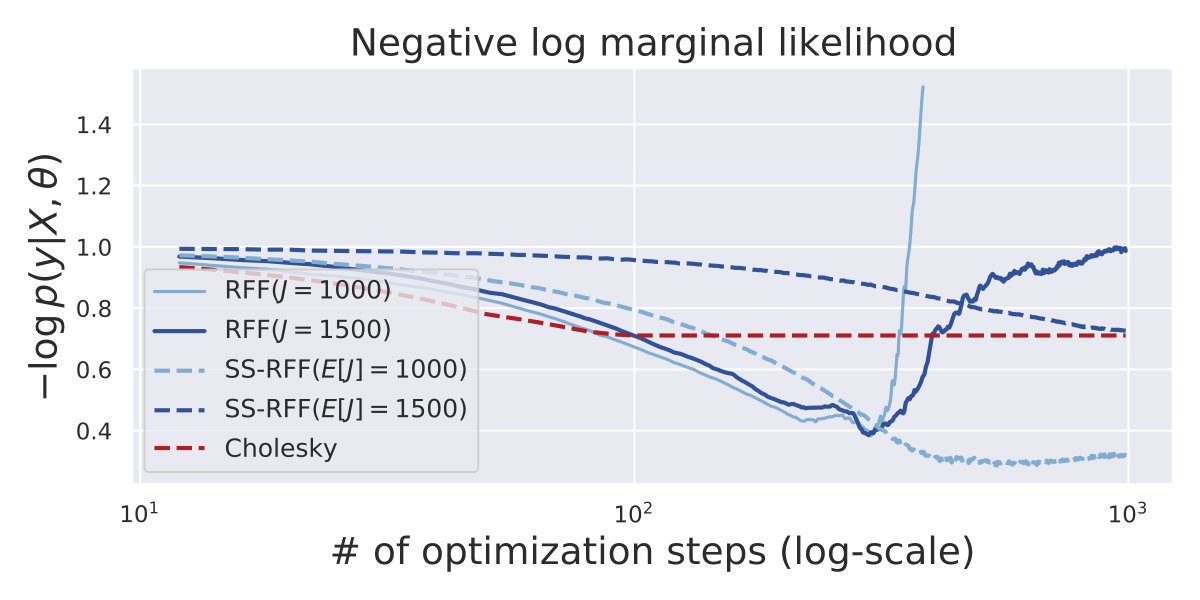}
\caption{
The GP optimization objective for models trained with RFF and SS-RFF. 
(Bike dataset, RBF kernel, Adam optimizer.)
RFF models converge to sub-optimal log marginal likelihoods.
SS-RFF models converge to (near) optimum values, yet require more than $100\times$ as many optimization steps. 
}
\label{fig:loss_evolution_rff_bike}
\end{center}
\vskip -0.2in
\end{figure}

The RR-CG models converge to optimal solutions, while the (biased) CG models diverge. Only the results for a low number of iterations is plotted since for more then 40 iterations CG and RR-CG are indistinguishable from Cholesky.

\begin{figure}[h!]
\vskip 0.1in
\begin{center}
\includegraphics[scale=0.4]{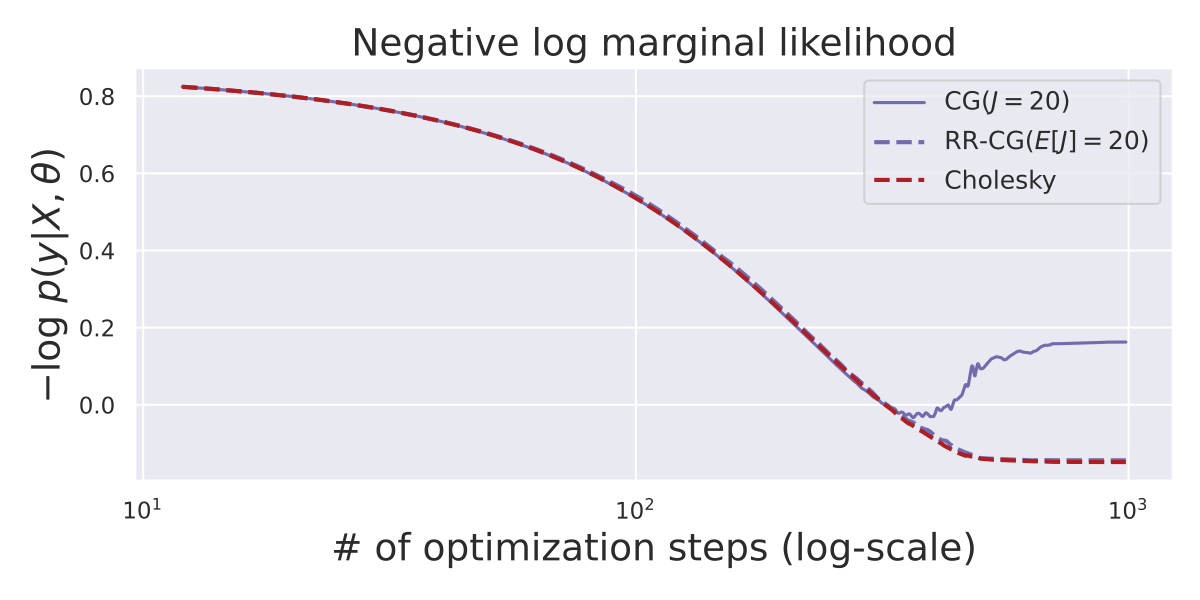}
\caption{
The GP optimization objective for models trained with CG and RR-CG. 
(PoleTele dataset, RBF kernel, Adam optimizer.)
Models converge in $<100$ steps of Adam.
}
\label{fig:loss_evolution_cg_pol}
\end{center}
\vskip -0.2in
\end{figure}
\end{document}